\newcommand{\mymacro}[1]{{\color{MacroColor} #1}}
\newcommand{\defn}[1]{\textbf{#1}}
\newcommand{\defeq}{\mathrel{\stackrel{\textnormal{\tiny def}}{=}}}
\newcommand{\ifcondition}{\textbf{if }}
\newcommand{\otherwisecondition}{\textbf{otherwise }}
\newcommand{\ignore}[1]{}
\newcommand{\expandLater}[1]{}
\newcommand{\onehot}[1]{{\mymacro{ \llbracket#1\rrbracket}}}
\newcommand{\acronymText}[1]{\mymacro{\mathrm{#1}}}
\newcommand{\ptm}{{\mymacro{\acronymText{PTM}}}}
\newcommand{\qptm}{{\mymacro{\Q\ptm}}}
\newcommand{\qrdptm}{{\mymacro{\acronymText{RD\!-\!}\qptm}}}
\newcommand{\twopda}{{\mymacro{\acronymText{2PDA}}}}
\newcommand{\rdpda}{{\mymacro{\acronymText{RD\!-\!}\twopda}}}
\newcommand{\sdpda}{{\mymacro{\acronymText{\alphabet\!-\!}\twopda}}}
\newcommand{\rnnlm}{{\mymacro{\acronymText{RLM}}}}
\newcommand{\epsrnnlm}{{\mymacro{\eps\rnnlm}}}
\newcommand{\Q}{\mymacro{\mathbb{Q}}}
\newcommand{\Z}{\mymacro{\mathbb{Z}}}
\newcommand{\R}{\mymacro{\mathbb{R}}}
\newcommand{\Dyadic}{\mymacro{\Z_{[2^{-n}]}}}
\newcommand{\func}{\mymacro{f}}
\newcommand{\norm}[1]{\mymacro{\left\lVert #1 \right\rVert}}
\newcommand{\set}[1]{\mymacro{\left\{ #1 \right\}}}
\newcommand{\pdens}{\mymacro{p}}
\newcommand{\tmprob}{\mathbb{P}_\tm}
\newcommand{\alphabet}{\mymacro{\Sigma}}
\newcommand{\epsalphabet}{\mymacro{\alphabet_\eps}}
\newcommand{\stackalphabet}{\mymacro{\Gamma}}
\newcommand{\epsstackalphabet}{\mymacro{\stackalphabet_\eps}}
\newcommand{\eosalphabet}{\mymacro{\overline{\alphabet}}}
\newcommand{\eosepsalphabet}{\mymacro{\eosalphabet_\eps}}
\newcommand{\lang}{\mymacro{L}}
\newcommand{\kleene}[1]{\mymacro{#1^*}}
\newcommand{\str}{\mymacro{\boldsymbol{\sym}}}
\newcommand{\strlt}{\mymacro{\str_{<\tstep}}}
\newcommand{\strlen}{\mymacro{N}}
\newcommand{\sym}{\mymacro{y}}
\newcommand{\stacksym}{\mymacro{\stacksymbol{\gamma}}}
\newcommand{\stateq}{\mymacro{q}}
\newcommand{\states}{\mymacro{Q}}
\newcommand{\trans}{\mymacro{\delta}}
\newcommand{\transFun}[1]{\mymacro{\trans\left(#1\right)}}
\newcommand{\apath}{\mymacro{\boldsymbol \pi}}
\newcommand{\apathtwo}{\mymacro{\boldsymbol \sigma}}
\newcommand{\qinit}{\mymacro{q_{\iota}}}
\newcommand{\qfinal}{\mymacro{q_{\varphi}}}
\newcommand{\uwedge}[4]{\mymacro{#1 \xrightarrow[#3]{#2} #4}}
\newcommand{\twostackgppdatuple}{\mymacro{\left(\states, \alphabet, \stackalphabet, \trans, \qinit, \qfinal \right)}} 
\newcommand{\pushdown}{\mymacro{\mathcal{P}}}
\newcommand{\pda}{\mymacro{\pushdown}}
\newcommand{\twoPdaEdge}[7]{\mymacro{\uwedge{#1}{#2, #4 \rightarrow #5}{\phantom{#2} #6 \rightarrow #7}{#3}}}
\newcommand{\stackseq}{\mymacro{{\boldsymbol{\gamma}}}}
\newcommand{\stackbottom}{\mymacro{{\perp}}}
\newcommand{\stacksymbol}[1]{\mymacro{#1 }}
\newcommand{\measure}{\mymacro{\mu}}
\newcommand{\pLM}{\mymacro{p}}
\newcommand{\pLNSM}{\pLM}
\newcommand{\eos}{\mymacro{\textsc{eos}}}
\newcommand{\embedDim}{\mymacro{R}}
\newcommand{\inEmbedding}{{\mymacro{ \vr}}}
\newcommand{\inEmbeddingFun}[2][]{{\mymacro{ \inEmbedding\!\left(#2\right)}}}
\newcommand{\inEmbedSymt}{\mymacro{\inEmbeddingFun{\sym_\tstep}}}
\newcommand{\eembedMtx}{\mymacro{\emE}}
\newcommand{\symt}{\mymacro{\sym_{\tstep}}}
\newcommand{\tm}{\mymacro{\mathcal{M}}}
\newcommand{\tmFun}[1]{\mymacro{\tm\left(#1\right)}}
\newcommand{\undefined}{\mymacro{\scaleto{\texttt{undef}}{7pt}}}
\newcommand{\ptmtupletwo}{\mymacro{\left( \states, \alphabet, \tapealphabet, \trans_1, \trans_2, \qinit, \qfinal \right)}} 
\newcommand{\qptmtuple}{\mymacro{\left( \states, \alphabet, \tapealphabet, \trans_\tm, \qinit, \qfinal \right)}}
\newcommand{\tapealphabet}{\mymacro{\Gamma}}
\newcommand{\tmdir}{\mymacro{d}}
\newcommand{\tmleft}{\mymacro{L}}
\newcommand{\tmright}{\mymacro{R}}
\newcommand{\tmnoop}{\mymacro{N}}
\newcommand{\tmops}{\{\tmleft,\tmright,\tmnoop\}}
\newcommand{\tapesym}{\mymacro{\stacksym}}
\newcommand{\blanksym}{\mymacro{\sqcup}}
\newcommand{\toStackEnc}{\mymacro{\eta}}
\newcommand{\toStackEncFun}[1]{\toStackEnc\left(#1\right)}
\newcommand{\idxn}{\mymacro{n}}
\newcommand{\idxd}{\mymacro{d}}
\newcommand{\nstates}{\mymacro{|\states|}}
\newcommand{\tstep}{\mymacro{t}}
\newcommand{\rnn}{\mymacro{\mathcal{R}}}
\newcommand{\recMtx}{\mymacro{\mU}}
\newcommand{\inMtx}{\mymacro{\mV}}
\newcommand{\outMtx}{\mymacro{\mE}}
\newcommand{\eOutMtx}{\mymacro{\eembedMtx}}
\newcommand{\hiddDim}{\mymacro{D}}
\newcommand{\biasVech}{\mymacro{\vb}}
\newcommand{\hiddState}{\mymacro{\vh}}
\newcommand{\hiddStatet}{\mymacro{\vh_\tstep}}
\newcommand{\vhzero}{\mymacro{\vh_0}}
\newcommand{\hiddStateZero}{\mymacro{\vhzero}}
\newcommand{\vhtminus}{\mymacro{\vh_{t-1}}}
\newcommand{\SimplexEosalphabetminus}{\mymacro{\boldsymbol{\Delta}^{|\eosalphabet|-1}}}
\newcommand{\SimplexEosEpsalphabetminus}{\mymacro{\boldsymbol{\Delta}^{|\eosalphabet_\eps|-1}}}
\DeclareMathSymbol{\mlq}{\mathord}{operators}{``} 
\DeclareMathSymbol{\mrq}{\mathord}{operators}{`'} 
\newcommand{\negterm}[1]{\mymacro{{\raise.17ex\hbox{$\scriptstyle\sim$}} #1}}
\def\1{\mathbf{1}}
\def\eps{\mymacro{\varepsilon}}
\def\vb{\mymacro{\mathbf{b}}}
\def\vh{\mymacro{\mathbf{h}}}
\def\vr{\mymacro{\mathbf{r}}}
\def\vx{\mymacro{\mathbf{x}}}
\def\vz{\mymacro{\mathbf{z}}}
\def\evh{\mymacro{h}}
\def\mE{\mymacro{\mathbf{E}}}
\def\mU{\mymacro{\mathbf{U}}}
\def\mV{\mymacro{\mathbf{V}}}
\DeclareMathAlphabet{\mathsfit}{\encodingdefault}{\sfdefault}{m}{sl}
\SetMathAlphabet{\mathsfit}{bold}{\encodingdefault}{\sfdefault}{bx}{n}
\def\emE{\mymacro{E}}
\newcommand{\proj}{\mymacro{\boldsymbol{\pi}}}
\newcommand{\projFunc}[1]{\mymacro{\proj\left(#1\right)}}
\newcommand{\sigmoid}{\mymacro{\func}}
\newcommand{\sigmoidFun}[1]{\mymacro{\sigmoid\left(#1\right)}}
\DeclareMathOperator*{\argmin}{\mymacro{argmin}}
\title{On the Representational Capacity of Recurrent Neural Language Models}
\author{Franz Nowak$^1$~\;~Anej Svete$^1$~\;~Li Du$^2$~\;~Ryan Cotterell$^1$ \\
    $^1$ETH Zürich \quad $^2$Johns Hopkins University \\
  \{\href{mailto:fnowak@ethz.ch}{\texttt{fnowak}}\texttt{, }\href{mailto:asvete@ethz.ch}{\texttt{asvete}}\texttt{, }\href{mailto:ryan.cotterell@inf.ethz.ch}{\texttt{rcotterell}}\}\texttt{@ethz.ch}\quad \href{mailto:leodu@cs.jhu.edu}{\texttt{leodu@cs.jhu.edu}}}
\newcommand{\FirstPage}{7011}
\newcommand{\LastPage}{7034}
\newcommand{\Pages}[2]{\ifthenelse{\equal{#1}{#2}}{page #1}{pages #1--#2}}
\newcommand{\FirstLine}{}
\newcommand{\SecondLine}{Proceedings of the 2023 Conference on Empirical Methods in Natural Language Processing,}
\newcommand{\ThirdLine}{December 6-10, 2023 \textcopyright2023 Association for Computational Linguistics}
\newcounter{Yloc}
\newcommand{\citeinfo}[2]{
  \AddToShipoutPicture*{%
    \setlength{\unitlength}{1mm}
    \ifdefempty{\FirstLine}{}{
    \put(105,\value{Yloc}){\makebox(0,0){\footnotesize {\em \FirstLine}}} %
    \addtocounter{Yloc}{-4}
    }
    \put(105,\value{Yloc}){\makebox(0,0){\footnotesize {\em \SecondLine}~\Pages{\FirstPage}{\LastPage}}}
    \addtocounter{Yloc}{-4}
    \put(105,\value{Yloc}){\makebox(0,0){\footnotesize \ThirdLine}}
  }
}
\citeinfo{\FirstPage}{\LastPage}
\begin{document}
\maketitle

\begin{abstract}
This work investigates the computational expressivity of language models (LMs) based on recurrent neural networks (RNNs). 
\citet{siegelmann-sontag-1992} famously showed that RNNs with rational weights and unbounded computation time are Turing complete. 
However, \emph{LMs} derived from RNNs define weightings over strings instead of just (unweighted) language membership.
This paper contends that the analysis of the computational power of RNN LMs ($\rnnlm$s) should reflect this.
We extend the Turing completeness result to the probabilistic case, showing how a rationally weighted $\rnnlm$ with unbounded computation time can simulate any deterministic \emph{probabilistic} Turing machine ($\ptm$) with rationally weighted transitions. 
Since, in practice, $\rnnlm$s work in real-time, processing a symbol at every time step, we treat the above result as an upper bound on the expressivity of $\rnnlm$s. 
We also provide a lower bound by showing that under the restriction to real-time computation, such models can simulate deterministic real-time rational $\ptm$s.\looseness=-1

\vspace{0.5em}
{\includegraphics[width=1.4em,height=1.25em]{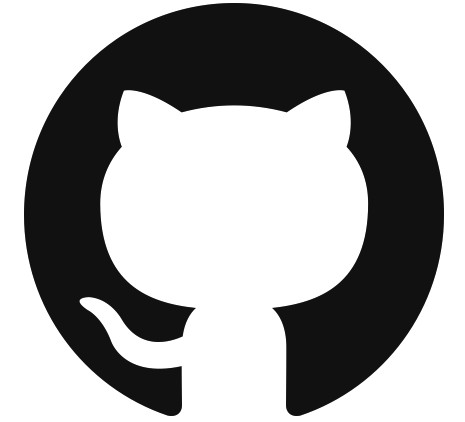}\hspace{1em}\parbox{\dimexpr\linewidth-2\fboxsep-2\fboxrule}{\url{https://github.com/rycolab/rnn-turing-completeness}}}
\end{abstract}

\section{Introduction}
A \defn{language model} (LM) is definitionally a 
semimeasure\footnote{Roughly, a semimeasure is a generalization of a probability measure 
over strings such that the total measure may be less than one.}
over strings \citep{icard2020calibratinggm}.
Recent advances in their capabilities, leading to the widespread adoption of LMs, have sparked interest in their theoretical properties and guarantees.
Previous work has characterized modern architectures such as recurrent neural networks \citep[RNNs;][]{elman-1990-finding,Hochreiter1997} in terms of the formal languages they can and cannot recognize \citep[][\textit{inter alia}]{Kleene+1956+3+42,minsky-67-computation,siegelmann-sontag-1992,merrill-etal-2020-formal}.
However, characterizing LMs as formal languages is, in some sense, a category error because LMs encode semimeasures over strings instead of deciding language membership \citep{chen-etal-2018-recurrent}. 
In this work, we thus offer another perspective on understanding RNN LMs ($\rnnlm$s) by asking: What classes of semimeasures over strings can $\rnnlm$s represent, i.e., what is their computational expressivity?\looseness=-1

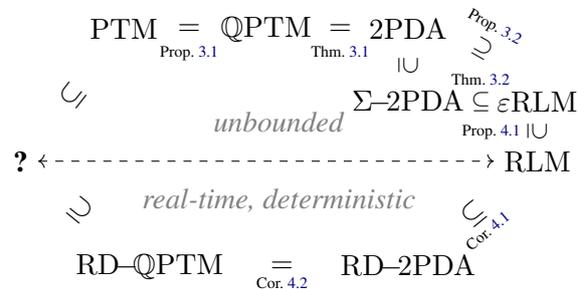
\begin{figure}[t!]
\centering
\begin{tikzpicture}[x=1.7cm,y=1.75cm]
\node[align=center](ptm) at (0.3,1) {$\ptm$};
\node[align=center](qptm) at (1.4,1) {$\qptm$};
\node[align=center](pda) at (2.5,1) {$\twopda$};
\node[align=center](sdpda) at (2.5,0.45) {$\sdpda$};
\node[align=center](epsrnn) at (3.5,0.45) {$\epsrnnlm$};
\node[align=center](rnn) at (3.5,0) {$\rnnlm$};
\node[align=center](rdpda) at (2.5,-0.8) {$\rdpda$};
\node[align=center](qrdptm) at (0.5,-0.8) {$\qrdptm$};
\node[align=center](question) at (-0.5,0) {\textbf{?}};
\node[align=center](unbounded) at (1.5, 0.3) {{\color{gray}\textit{unbounded}}};
\node[align=center](unbounded) at (1.5, -0.3) {{\color{gray}\textit{real-time, deterministic}}};
\draw[draw=none] (ptm) -- node[]{$=$} node[below=0.05]{\tiny\cref{prop:ptm-qptm}} (qptm);
\draw[draw=none] (qptm) -- node[]{$=$} node[below=0.05]{\tiny\cref{prop:qptm-pda}} (pda);
\draw[draw=none] (pda) -- node[sloped]{\small$\supseteq$} node[right=0.05]{} (sdpda);
\draw[draw=none] (sdpda) -- node[sloped]{\small$\subseteq$} node[above=0.05]{\tiny\cref{thm:pda-epsrnn}} (epsrnn) ;
\draw[draw=none] (pda) -- node[sloped, above]{\small$\supseteq$} node[above=0.2, sloped]{\tiny\cref{prop:epsrnn-twopda}} (epsrnn);
\draw[draw=none] (epsrnn) -- node[sloped]{\small$\supseteq$} node[left=0.05]{\tiny\cref{prop:epsrnn-rnn}}(rnn);
\draw[draw=none] (rdpda) -- node[sloped]{$\subseteq$} node[sloped, below=0.05]{\tiny\cref{thm:rnn-rdpda}} (rnn);
\draw[draw=none] (qrdptm) -- node[]{$=$} node[below]{\tiny\cref{prop:qrdptm-rdpda}} (rdpda);
\draw[draw=none] (ptm) -- node[sloped]{$\subseteq$} (question);
\draw[draw=none] (question) -- node[sloped]{$\supseteq$} (qrdptm);
\draw[<->, dashed] (question) -- node[sloped]{} (rnn);
\end{tikzpicture}
\caption{Roadmap through the paper showing relations between different models of computation. 
A $\ptm$ is a reformulation of the classic probabilistic Turing machine. 
A $\qptm$ is a $\ptm$ with multiple rationally weighted transition functions. 
A $\twopda$ is a probabilistic two-tape pushdown automaton. 
A $\sdpda$ is a $\twopda$ that is deterministic in its output alphabet. 
An $\rnnlm$ is a simple RNN LM. 
An $\epsrnnlm$ is an $\rnnlm$ augmented with an empty output symbol ($\eps$). 
The prefix ``RD-'' denotes deterministic real-time machines.}
\label{fig:roadmap}
\end{figure}

The empirical capabilities of trained language models have spurred a large field of work testing their reasoning and linguistic abilities.
However, our theoretical understanding of what these models are inherently capable of is still lacking \citep{deletang2023neural}.
Connecting an LM architecture to well-understood models of computation can help us determine whether the architecture is able to perform the sequences of computations required to carry out an algorithmic task \citep{perez2018on}.
Furthermore, connecting it to linguistic models can tell us whether the architecture is capable of correctly modeling the linguistic structure of a sentence symbolically \citep{linzen-etal-2016-assessing}.
Finally, characterizing the types of semimeasures the architecture can represent allows us to make more concrete claims about the abilities and limitations of the architecture itself. \looseness=-1

$\rnnlm$s have set many important milestones in language modeling and 
still hold the state of the art in some important settings of natural language processing \citep{qiu2020, orvieto2023resurrecting}.
Moreover, despite the recent trend towards the recurrence-free and, thus, parallelizable, transformer-based LMs \citep{Vaswani2017}, elements of recurrence have found their way into recent language models and RNNs themselves have recently even been proposed as alternatives or extensions to some high-performing models \citep{peng2023rwkv,orvieto2023resurrecting, zhou2023recurrentgpt}.
At a high level, RNNs work by maintaining a hidden state encoding the processed string, much like how formal models of computation such as Turing machines process and store information.
This sequential nature has motivated the comparison of the computational power of RNNs to that of various formal models of computation, from simple models such as finite-state automata \citep{Kleene+1956+3+42, merrill-etal-2020-formal} and counter machines, all the way up to Turing machines and related models \citep{minsky-67-computation,siegelmann-sontag-1992,weiss-etal-2018-practical}.\looseness=-1

Precisely where RNNs end up on the hierarchy of formal models of computation depends on the specific formalization.
In this work, we characterize the computational power of RNNs in their most permissive formalization, i.e., one that allows RNNs to process and produce rational-valued vectors and perform an unbounded number of computational steps per input symbol by allowing them to emit empty tokens, $\eps$, in between words.
\citet{siegelmann-sontag-1992} show that such RNNs can simulate any deterministic Turing machine and are, hence, Turing complete.\footnote{Note that the restriction on the Turing machine of being deterministic does not change the generated language since any non-deterministic Turing machine has an equivalent deterministic Turing machine,
albeit with a potentially much longer running time for a given string \citep{gill1974computational}.}
While this sheds light on the processing power of RNNs, their result is not directly applicable to language modeling, as it does not take into account the probability assigned to the strings.
By extending \citeposs{siegelmann-sontag-1992} construction to the probabilistic case, we provide first steps towards understanding the expressive power of $\rnnlm$s with rational arithmetic.
We show that $\rnnlm$s with rational weights and unbounded computation time can compute exactly the same semimeasures over strings as probabilistic Turing machines.\looseness=-1

On one hand, rational arithmetic offers a reasonably faithful formalization of real-world models in that computer scientists often analyze numerical algorithms using such an idealization.\footnote{In many cases even real arithmetic is assumed, e.g., when theoretically analyzing optimization algorithms \citep[Ch. 1]{forst2010optimization}.}
However, on the other hand, the assumption of unbounded computation time \emph{does} represent a large departure from realistic models.
In practice, $\rnnlm$s perform a constant number of computational steps per symbol, operating in a real-time setting \citep{weiss-etal-2018-practical}.
Therefore, we treat the above result as an \emph{upper bound} on the computational power of recurrent $\rnnlm$s.
As a lower bound, we study a second type of $\rnnlm$s, restricting the models to operate in real-time, which results in a more fine-grained hierarchy of specific Turing machine-like models equivalent to an $\rnnlm$.
We hence characterize the expressivity of $\rnnlm$s in terms of classical computational models.\looseness=-1

Our work offers a first step towards a comprehensive characterization of the expressivity of $\rnnlm$s in terms of the classes of probability measures they can represent. 
In addition to providing insights into the computational capacity of $\rnnlm$s, the work also follows the recent exploration of the measure-theoretic foundations of LMs \citep{welleck-etal-2020-consistency,meister-etal-2023-locally,du-etal-2023-measure}, while focusing on a particular architecture.
We conclude the paper by posing several open questions on the exact position of $\rnnlm$s in the hierarchy of relevant computational models.
\cref{fig:roadmap} shows a roadmap of the paper, with the two types of $\rnnlm$s of interest and their relation to different formal computational models.
\looseness=-1

\section{Preliminiaries}
In this section, we build up the necessary definitions and vocabulary for the rest of the paper.

\subsection{Recurrent Neural Language Models} \label{sec:rnn-lm}
A \defn{formal language} $\lang$ is a subset of the Kleene closure $\kleene{\alphabet}$ of some finite non-empty set of symbols, i.e., an \defn{alphabet}, $\alphabet$.
An element of $\kleene{\alphabet}$ is called a \defn{string}, $\str$.
Furthermore, $\eps$ denotes the empty string.
We assume throughout that $\eps \notin \alphabet$ and denote $\epsalphabet \defeq \alphabet \cup \set{\eps}$.
A (discrete) \defn{semimeasure} over $\kleene{\alphabet}$ is a function $\measure\colon\kleene{\alphabet}\to[0,1]$ such that 
$\sum_{\str\in \kleene{\alphabet}}\measure\left(\str\right) \leq 1$ \citep{bauwens2013upper, icard2020calibratinggm}.
If the semimeasure of all strings sums to one, i.e., $\sum_{\str\in \kleene{\alphabet}}\measure\left(\str\right) = 1$, then $\measure$ is called a \defn{probability measure}.%
\footnote{Note that our definition differs from \citeposs{li2008kolmogorov} who instead define semimeasures over \emph{prefix strings}.}
A \defn{language model} (LM) $\pLM$ is defined as a semimeasure over $\kleene{\alphabet}$.
If $\pLM$ is a probability measure, we call it a \defn{tight} language model.\looseness=-1

Most modern LMs are autoregressive, meaning they define $\pLM\left(\str\right)$ through conditional semimeasures of the next symbol given the string produced so far and the measure of ending the string, i.e.,\looseness=-1 
\begin{equation}
    \label{eq:autoregressive-lm}
    \pLM\left(\str\right) \defeq \pLM\left(\eos\mid \str\right) \prod_{\idxn = 1}^{\strlen} \pLM\left(\sym_\idxn\mid\str_{<\idxn}\right)
\end{equation}
where $\eos$ denotes the special \underline{e}nd-\underline{o}f-\underline{s}tring symbol, which specifies that the generation of a string has halted.
The inclusion of $\eos$ allows (but does not guarantee) a $\pLM$ defined autoregressively to define a probability measure over $\kleene{\alphabet}$ \citep{du-etal-2023-measure}. 
We will denote $\eosalphabet \defeq \alphabet \cup \set{\eos}$.

We will use the following definition of an RNN.
\begin{definition} \label{def:elman-rnn}
A \defn{simple RNN} $\rnn$ is an RNN with the following hidden state update rule: 
\begin{equation} \label{eq:elman-update-rule}
\hiddStatet \defeq \sigmoid\left(\recMtx \vhtminus + \inMtx \inEmbedSymt + \biasVech \right)
\end{equation}
where $\hiddStateZero$ is a vector in $\Q^\hiddDim$, $\hiddDim$ is the dimensionality of the hidden state, $\inEmbedding\colon \alphabet \to \Q^\embedDim$ is the symbol representation function, $\embedDim$ is the embedding dimension, $\biasVech \in \Q^{\hiddDim}$, $\recMtx \in \Q^{\hiddDim \times \hiddDim}$, and $\inMtx \in \Q^{\hiddDim \times \embedDim}$. 
The function $\sigmoid$ is the \defn{saturated sigmoid}, defined as:
\begin{equation}
    \sigmoidFun{x} \defeq \begin{cases}
        0 & \ifcondition x < 0 \\
        x & \ifcondition x \in \left[0, 1\right] \\
        1 & \ifcondition x > 1
    \end{cases}
\end{equation}
\end{definition}

Due to their sequential nature, RNNs have been linked to formal models of computation such as finite-state automata, pushdown automata (PDA), and Turing machines under various formalizations with different implications on computational power
\citep[e.g.,][\textit{inter alia}]{siegelmann-sontag-1992,hao-etal-2018-context,DBLP:journals/corr/abs-1906-06349,merrill-2019-sequential,merrill-etal-2020-formal,hewitt-etal-2020-rnns}.
For example, if, instead of using the saturated sigmoid, we assumed that $\func$ is a function that maps to a finite set, this would result in RNNs that are at most as expressive as finite-state automata \citep{minsky-67-computation, svete2023recurrent}.
\citet{merrill-etal-2020-formal} study the computational power of saturated RNNs by investigating the effect of asymptotically large weights.
Finally, \citet{siegelmann-sontag-1992} assumes rational-valued arithmetic, which is the convention we follow in this work.\looseness=-1

An RNN specifies an LM by defining a conditional probability measure over $\symt$ given $\strlt$.
\label{def:rnnlm}
Let $\outMtx \in \Q^{|\eosalphabet| \times \hiddDim}$ be an output matrix and $\rnn$ an RNN.
An \defn{$\rnnlm$} is an LM whose conditional probability measures are defined by 
projecting $\outMtx \hiddStatet$ to the probability simplex $\SimplexEosalphabetminus$ using a projection function  $\proj\colon \Q^\hiddDim \to \SimplexEosalphabetminus$:
\begin{equation}
\pLNSM(\symt \mid \str_{<\tstep}) \defeq \projFunc{\outMtx \hiddStatet}_{\symt}
\end{equation}
When generating from an $\rnnlm$, we assume the next symbol is sampled according to the probabilities defined by $\projFunc{\outMtx \hiddStatet}$ and is then passed as the next input symbol back into the RNN until $\eos$ is generated.\looseness=-1

\subsection{Turing Machines}
We use a reformulation of the classic definition of a probabilistic Turing machine similar to \citeposs{Weihrauch00} Type-2 Turing machine.\footnote{Note that our adding an additional tape does not increase the power of the Turing machine \citep[Ch. 3]{sipser13}.}
\begin{definition} 
A \defn{probabilistic Turing machine} ($\ptm$) is a two-tape machine specified by the $6$-tuple $\tm = \ptmtupletwo$, where 
\begin{itemize}[itemsep=0.5pt,parsep=0.5pt]
    \item $\states$ is a finite set of states;
    \item $\alphabet$ and $\tapealphabet$ are the input and tape alphabets, and $\tapealphabet$ includes the blank symbol $\blanksym$;
    \item $\qinit,\qfinal \in \states$ are initial and final states;
    \item $\trans_{\{1,2\}}: \states \times \tapealphabet \rightarrow \states \times \tapealphabet \times \epsalphabet \times \tmops$ are two transition functions, one of which is chosen at random at each computation step.
\end{itemize}
\end{definition}
The Turing machine defined above has two tapes. 
The first is a working tape on which symbols from the tape alphabet $\tapealphabet$ can be read and written. 
The second is an append-only output tape on which $\tm$ writes symbols of the output alphabet $\alphabet$. 
In the beginning, both tapes are empty, i.e., the working tape has only blank symbols $\blanksym$, and the output tape has only empty symbols $\eps$. 
Starting in the initial state $\qinit$, at any time step $t$, the machine samples one of the two transition functions at random, each with probability $\frac{1}{2}$, and applies it.
A given transition can be written as $(\stateq, \tapesym) \xrightarrow{\sym/ \tmdir} (\stateq', \tapesym') $, where $\stateq,\stateq'\in\states, \tapesym, \tapesym'\in\stackalphabet, \sym\in\epsalphabet$, and $\tmdir\in\tmops$.
The semantics of such a transition is as follows: 
When in state $\stateq$ and reading $\tapesym$ on the working tape, go to state $\stateq'$, write $\tapesym'$ to the working tape, write
$\sym\in\epsalphabet$ to the output tape, and move the head on the working tape by one symbol along the tape in the direction $\tmdir$, that is, left ($\tmleft$), right ($\tmright$), or stay in place ($\tmnoop$).\footnote{The stay-in-place operation is often added to make proofs simpler but does not add expressivity \cite[Ch. 3]{sipser13}.}
When $\sym=\eps$, the machine simply does not write anything on the output tape.
The machine \defn{halts} once it reaches the final state $\qfinal$.
We call the sequence of symbols $\str\in\kleene{\alphabet}$ on the output tape at that point the \defn{output} of the machine.

Note that, once a transition function has been chosen, since it is a function, the next transition is uniquely determined by the current state $\stateq$ and the current tape symbol $\tapesym$ under the read-write head.
In the following, we call a pair of $(\stateq,\tapesym)\in\states\times\tapealphabet$ a \defn{configuration} of $\tm$.\looseness=-1

\begin{remark}\label{rem:halting-prob}
Given a probabilistic Turing machine $\tm$ as defined above, we can get the probability of $\tm$ halting and outputting a specific string $\str$ by summing the probabilities of all halting paths\footnote{A path is a finite sequence of actions performed by a computational model starting from an initial configuration.
A path is \emph{halting} if it ends in a final state.} through the machine that result in $\str$ being written on the output tape (the probability of each path is $2^{-n}$, where $n$ is the number of computation steps).
\end{remark}
\Cref{rem:halting-prob} induces a semimeasure over the possible sequences $\str\in\kleene{\alphabet}$ that a $\ptm\ \tm$ can output, which we will call $\tmprob$. 
That is, $\tmprob(\str)$ is the probability that $\tm$ will halt with $\str$ as its output.

\begin{remark}
The notion of halting probability as defined in \Cref{rem:halting-prob} has a counterpart in $\rnnlm$s, namely, the probability mass placed on all finite strings generated \citep{icard2020calibratinggm}.
For details, see \cref{appendix:rnn-halt-prob}.\looseness=-1
\end{remark}

\subsection{Pushdown Automata}
We now move to another probabilistic computational model: The two-stack pushdown automaton.

\begin{definition} \label{def:two-stack-wpda}
A \defn{probabilistic two-stack pushdown automaton} ($\twopda$) is a two-stack-machine defined by the tuple $\pushdown = \twostackgppdatuple$, where 
\begin{itemize}[itemsep=0.5pt,parsep=0.5pt]
    \item $\states$ is a finite set of states;
    \item $\alphabet$ and $\stackalphabet$ are the input and stack alphabets, and $\stackalphabet$ includes the bottom-of-stack symbol $\stackbottom$;
    \item $\qinit,\qfinal\in\states$ are the initial and final states;
    \item $\trans\colon \states \times \stackalphabet \times \epsalphabet \times \states \times \stackalphabet^4_\eps \to \Q_{\geq0}$ is a transition weighting function. 
\end{itemize}
\end{definition}
To make the connection to Turing machines more straightforward, our definition of a $\twopda$ assumes that its transitions depend on its current state $\stateq$ and the top stack symbol of \emph{only the first} of the two stacks.\footnote{This is without loss of generality; see \cref{appendix:2pda}.}
We write transitions as $\twoPdaEdge{\stateq}{\sym,\stacksym}{\stateq'}{\stacksym_1}{\stacksym_3}{\stacksym_2}{\stacksym_4}$, for $\stateq, \stateq'\in\states$, $\sym\in\epsalphabet$, $\stacksym \in \stackalphabet,\stacksym_1, \stacksym_2, \stacksym_3,\stacksym_4, \in\epsstackalphabet$.
Such a transition denotes that the $\twopda$ in the state $\stateq$ with the symbol $\stacksym$ on top of the first stack pops $\stacksym_1$ and $\stacksym_2$ from the first and second stack, pushes $\stacksym_3$ and $\stacksym_4$ onto the stacks and moves to state $\stateq'$. 
At the same time, the $\twopda$ consumes or emits (depending on the use-case) a symbol from $\epsalphabet$.

We require the rational weighting function $\trans$ of a $\twopda$ to be locally normalized over configurations $(\stateq,\stacksym)\in\states\times\stackalphabet$, where $\stacksym$ is the symbol currently on the top of the first stack:
\begin{equation}
    \sum_{\substack{\sym\in\epsalphabet, \stateq'\in\states,\\ \stacksym_1, \stacksym_2,\stacksym_3,\stacksym_4\in\epsstackalphabet}}\trans\left(\twoPdaEdge{\stateq}{\sym,\stacksym}{\stateq'}{\stacksym_1}{\stacksym_3}{\stacksym_2}{\stacksym_4}\right) = 1
\end{equation}
A $\twopda$ starts at the initial state $\qinit$ with both stacks empty (only containing the symbol $\stackbottom$) and then sequentially applies transitions according to their probability given by $\trans$. 
The automaton halts when reaching the final state, $\qfinal$. 
The sequence of the symbols output by the automaton concatenated in the order of transitions taken constitutes the output string.\looseness=-1

\paragraph{A note on variants of computational models.}\label{note:generative}
Our definition of a probabilistic Turing machine differs from the traditional definition of mere language \emph{acceptors} in that they start from a starting state $\qinit$ and then iteratively apply probabilistic transitions to \emph{generate} outputs $\str\in\kleene{\alphabet}$, where each specific $\str$ has a corresponding probability of being produced.
This is to simplify the comparison to $\twopda$ and to be able to interpret them as language models in their own right. 

Next, we define what it means for two probabilistic models of computation to be equivalent.
\begin{definition} \label{def:equivalence}
We say that two probabilistic 
computational models $\tm_1$ and $\tm_2$ are \defn{weakly equivalent} if, for any string $\str\in\kleene{\alphabet}$, we have ${\tmprob}_1(\str) = {\tmprob}_2(\str)$.
If, furthermore, there exists a weight-preserving, yield-preserving\footnote{The yield of a path is the string it produces, i.e., the sequence of symbols emitted by the actions on the path.} bijection between halting paths in the two models, they are called \defn{strongly equivalent}\looseness=-1.\footnote{We also lift this definition to \textit{classes} of models ($C_1,C_2$), which are called strongly equivalent if $\forall\tm_1\in C_1 \exists \tm_2 \in C_2$ such that $\tm_1$ and $\tm_2$ are strongly equivalent, and vice versa.\looseness=-1}
\end{definition}

\section{An Upper Bound} \label{sec:upper-bounds}
In this section, we establish an upper bound on the expressive power of $\rnnlm$s by extending \citeposs{siegelmann-sontag-1992} result to the probabilistic case of language models.
Because we want to upper bound the power of $\rnnlm$s used in practice, we will start with a more unrealistic recurrent LM which can output empty symbols ($\eps$), which we denote as $\epsrnnlm$.
We first introduce a variant of probabilistic Turing machines that can have an arbitrary (finite) number of rationally weighted transition functions and show that they are strongly equivalent to $\twopda$ (\cref{sec:upper-bound-equivalences}).
We then review \citeposs{siegelmann-sontag-1992} construction for the unweighted case (\cref{sec:unweighted-case}).
Finally, we extend this construction to the probabilistic case by showing how to simulate a $\twopda$ with an $\epsrnnlm$.
We conclude with the observation that this results in the equivalence of $\ptm$s and $\epsrnnlm$s (\cref{sec:upper-probabilistic}).
\looseness=-1

\subsection{Rationally Weighted $\ptm$s} \label{sec:upper-bound-equivalences}
This paper considers the expressive power of RNNs with \emph{rational} weights.
To make the connection to $\ptm$s easier, it is helpful to define a more general type of a $\ptm$ which, instead of sampling between two equally probable transition functions, can have any number of possible transitions at a given computation step, each of which has a rational probability of being applied.\looseness=-1
\begin{definition}
    A \defn{rational-valued probabilistic Turing machine} ($\qptm$) is a $\ptm$ whose transition weighting function is of the form:
    \begin{equation}
        \trans\colon \states \times \tapealphabet \times \epsalphabet \times \states \times \tapealphabet \times \tmops \to \Q_{\geq0}
    \end{equation}
    In other words, for any current configuration, it assigns a rational-valued probability in the interval $[0,1]$ to each available transition. 
    We require that the probabilities are normalized over configurations, that is, for all $\stateq\in\states, \tapesym\in\tapealphabet$:
    \begin{equation}
        \sum_{\substack{\sym\in\epsalphabet,\ \stateq'\in\states,\\\tapesym'\in\tapealphabet,\ \tmdir\in\tmops}}\transFun{(\stateq,\tapesym) \xrightarrow{\sym/d} (\stateq', \tapesym')} = 1
    \end{equation}
\end{definition}
\noindent
The original construction by \citet{siegelmann-sontag-1992} uses unweighted $\twopda$ which are equivalent to Turing machines \citep{hopcroft01}.
We now want to show that we can simulate a $\ptm$ with an $\epsrnnlm$ in the same way, that is via \emph{probabilistic} $\twopda$ as defined above. 
Therefore, we first show that $\ptm$s and probabilistic $\twopda$ are also equivalent, in the following two propositions.\looseness=-1
\begin{restatable}{proposition}{ptmqptm}
    \label{prop:ptm-qptm}
    $\ptm$s and $\qptm$s are weakly equivalent.\looseness=-1
\end{restatable}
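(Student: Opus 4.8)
The plan is to prove the two directions of the equivalence separately: every $\ptm$ admits a weakly equivalent $\qptm$, and conversely. In both directions it helps to view a run as a trajectory of a Markov chain on \emph{global states} — a global state being a machine state together with the working-tape contents, the head position, and the output-tape contents — and to observe that $\tmprob(\str)$ is exactly the probability that this chain, started from the initial global state, ever reaches a halting global state whose output tape reads $\str$; this is merely a regrouping of the sum of $2^{-n}$ (resp.\ of products of transition weights) over halting paths.

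The first direction is essentially immediate. Given $\tm = \ptmtupletwo$, I keep the same states, alphabets, and distinguished states and define the weighting function $\trans$ of the $\qptm$ so that, for every configuration $(\stateq,\tapesym)$, it puts weight $\tfrac12$ on the tuple $\trans_1(\stateq,\tapesym)$ and weight $\tfrac12$ on the tuple $\trans_2(\stateq,\tapesym)$ — the two contributions merging into a single weight-$1$ transition when the tuples coincide. These weights sum to $1$ over each configuration, so the result is a valid $\qptm$; and since distinct tuples out of a fixed global state lead to distinct successor global states, the probability this $\qptm$ assigns to each outgoing global-state transition equals the one $\tm$ assigns. The two machines thus induce the same Markov chain on global states, so they reach a halting global state with any given output $\str$ with the same probability, i.e., they assign $\str$ the same semimeasure.

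For the converse, I would simulate one transition of the $\qptm$ out of a configuration $c = (\stateq,\tapesym)$ — a choice among transitions $\atrans_1,\dots,\atrans_m$ with rational probabilities $a_1/b_c,\dots,a_m/b_c$ over a common denominator $b_c$, so $\sum_i a_i = b_c$ — by a short block of $\ptm$ steps performing rejection sampling with fair coins. Fix an integer $k \ge 1$ with $2^k \ge b_c$ for every configuration $c$ (possible since there are finitely many configurations). The simulating $\ptm$ has auxiliary states $(\stateq, w)$ indexed by bitstrings $w$ with $\lvert w\rvert < k$, and a block begins at $(\stateq,\eps)$. While $\lvert w\rvert < k-1$, the two transition functions append a $0$ resp.\ a $1$ to $w$, each emitting $\eps$, writing back the symbol just read, and keeping the head in place, so the cell under the head still holds $\tapesym$ throughout the block. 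From a state with $\lvert w\rvert = k-1$, the two transition functions append the final bit, forming $\widehat w \in \set{0,1}^k$ (so each of the $2^k$ values of $\widehat w$ arises with probability $2^{-k}$), and immediately execute the action prescribed by $\widehat w$ read as an integer: if $\widehat w < b_c$ it lies in the $i$-th of the consecutive intervals of sizes $a_1,\dots,a_m$ tiling $\set{0,\dots,b_c-1}$, and the action applies $\atrans_i$ (write its tape symbol, emit its symbol of $\epsalphabet$, move the head, and pass to the next block's start state, or to $\qfinal$ if $\atrans_i$ targets the final state); if $\widehat w \ge b_c$, the action returns to $(\stateq,\eps)$ with tape and head unchanged, emitting $\eps$, and the block restarts.

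The crux is to verify that one such block realizes the intended distribution. Within one round, $\atrans_i$ is committed with probability $a_i/2^k$ and a restart occurs with probability $(2^k - b_c)/2^k < 1$, so, summing the geometric series over the number of rounds, $\atrans_i$ is eventually chosen with probability $\sum_{r\ge 0}\big(\tfrac{2^k-b_c}{2^k}\big)^{r}\tfrac{a_i}{2^k} = \tfrac{a_i}{b_c}$, and the block exits with probability $1$ (runs that never leave the reject loop are non-halting and contribute nothing to the semimeasure). Since the auxiliary steps emit only $\eps$ and leave the working tape and head untouched, the reachable block-boundary global states of the $\ptm$ are in bijection with the global states of the $\qptm$, the induced transition probabilities between consecutive block-boundary global states match those of the $\qptm$, and the $\ptm$ halts exactly when the $\qptm$ does. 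Hence the $\ptm$ reaches a halting global state with output $\str$ with the same probability as the $\qptm$, giving weak equivalence. I expect this rejection-sampling direction to be the only real obstacle; once the block gadget is in place, the geometric-series identity and the bookkeeping about $\eps$-emissions and block boundaries are routine.
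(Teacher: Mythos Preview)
Your proof is correct. The forward direction matches the paper's (a $\ptm$ is already a $\qptm$ with weights $\tfrac12$). For the backward direction you and the paper diverge: the paper first \emph{binarizes} the $\qptm$ by peeling off one transition at a time via $\eps/\tmnoop$-edges to fresh states until every configuration has exactly two outgoing transitions, and then invokes the Knuth--Yao/Icard result that any rational Bernoulli can be simulated by a sequence of fair coin flips. You instead build a single rejection-sampling block that samples a uniform $k$-bit integer and either commits to one of the $m$ transitions or restarts, handling all outgoing transitions at once and proving the geometric-series identity explicitly. Your route is more self-contained (no external citation needed) and arguably cleaner, at the cost of a slightly larger state blowup; the paper's route is more modular but leans on a black-box lemma. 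Both correctly observe that the non-terminating reject loops have measure zero and therefore do not disturb weak equivalence.
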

See \cref{appendix:qptm} for the proof.
\begin{restatable}{theorem}{qptmpda}
    \label{prop:qptm-pda}
    $\qptm$s and $\twopda$ are strongly equivalent.
\end{restatable}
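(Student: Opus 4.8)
The plan is to establish strong equivalence in both directions by giving explicit constructions that preserve halting paths, weights, and yields. Recall from \cref{def:equivalence} that we must exhibit, for each machine, an opposite-type machine together with a weight-preserving, yield-preserving bijection between their halting paths. The classical ingredient we build on is the unweighted simulation of a (single-tape) Turing machine by a two-stack PDA \citep{hopcroft01}: the working tape is split at the head position into the portion to the left of the head (stored, bottom-to-top, on stack one) and the portion to the right of the head (stored, top-to-bottom, on stack two), with the currently scanned symbol sitting on top of stack one. A Turing-machine move then becomes a bounded manipulation of the two stack tops: writing the new symbol $\tapesym'$ replaces the top of stack one; moving $\tmright$ pops a symbol off stack two and pushes it onto stack one (or pushes a blank $\blanksym$ if stack two is empty); moving $\tmleft$ does the reverse; $\tmnoop$ leaves the split in place. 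The state of the PDA carries the state of the $\qptm$.

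First I would make this correspondence \emph{weighted}. Given a $\qptm\ \tm = \qptmtuple$, define a $\twopda\ \pushdown$ whose states are (a bounded extension of) $\states$ — we may need a constant number of auxiliary states per original transition to implement the bounded stack bookkeeping for a single $\qptm$ step, since one $\twopda$ transition only touches the two tops. For each $\qptm$ transition $\transFun{(\stateq,\tapesym)\xrightarrow{\sym/\tmdir}(\stateq',\tapesym')}$ with probability $p$, we route through a short deterministic gadget of $\twopda$ transitions whose first transition carries weight $p$ (emitting $\sym\in\epsalphabet$ exactly once along the gadget) and whose remaining transitions carry weight $\one$; these later transitions are deterministic (unique available transition given the configuration), so they contribute a multiplicative factor of $1$ and do not disturb local normalization. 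One must check local normalization of $\trans_\pushdown$ over configurations $(\stateq,\stacksym)$: at "entry" configurations the sum of the gadget-entry weights equals $\sum p = 1$ by the $\qptm$'s normalization, and at "internal" gadget configurations there is a unique outgoing transition of weight $\one$. The key subtlety here is the boundary case where the $\qptm$ head scans a blank that has never been visited (the tape is conceptually infinite but the stack contents are finite): this is handled exactly as in the unweighted construction by having the $\twopda$ synthesize a $\blanksym$ on demand when a stack would underflow, a deterministic move of weight $\one$. The halting-path bijection is then immediate: a halting computation of $\tm$ of length $n$ with probability $2$— er, with product-of-transition-probabilities $P$ maps to the unique halting run of $\pushdown$ obtained by concatenating the corresponding gadgets, whose weight is the same product $P$ (extra factors are all $\one$) and whose yield is the same sequence of emitted symbols (the $\eps$'s contribute nothing).

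For the converse, given a $\twopda\ \pushdown = \twostackgppdatuple$ I would simulate it by a $\qptm$ that keeps stack one on (say) the left half of the working tape growing leftward from a fixed origin and stack two on the right half growing rightward, with a sentinel layout so the head can shuttle between the two tops. Since by our convention (see the footnote referencing \cref{appendix:2pda}) a $\twopda$ transition inspects only the state and the top of stack one, the $\qptm$ reads that top symbol in its current configuration, then executes a bounded, deterministic sequence of head moves and writes to pop/push up to two symbols on each stack; the probabilistic choice of which $\twopda$ transition to take is made by a single $\qptm$ transition carrying the same rational weight, with all the subsequent shuttle/write moves deterministic of weight $\one$. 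Again local normalization transfers directly from $\trans_\pushdown$'s normalization over $(\stateq,\stacksym)$, and the halting-path bijection preserves weight and yield for the same reason as before. Composing the two directions and noting both are exact (no collapsing of distinct paths) gives strong equivalence, and hence also strong equivalence of the two \emph{classes}, as required.

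The main obstacle I anticipate is not any single step but the bookkeeping of the auxiliary states and the stack-boundary conventions so that (i) every introduced transition that is not a genuine probabilistic choice is provably the \emph{unique} enabled transition in its configuration — this is what keeps the weighting locally normalized and keeps the path map a bijection rather than merely weight-summing — and (ii) the empty/blank boundary cases on both the tape side and the stack side line up. Getting these invariants exactly right (rather than "morally right") is the delicate part; the arithmetic of weights is trivial once the combinatorics of the gadgets is pinned down.
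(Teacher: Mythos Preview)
Your proposal is correct and follows essentially the same approach as the paper: the classical two-stack/tape correspondence from \citet{hopcroft01}, with the probabilistic weight placed on the first transition of each gadget and the remaining transitions forced to be unique with weight $\one$, then a path-level bijection via gadget concatenation. The paper carries out the bijection more formally by structural induction on $\states$-subpaths (subpaths ending in an original, non-auxiliary state), whereas you call it ``immediate''; your closing paragraph correctly flags that this bookkeeping is where the work actually lies. One minor divergence: for the $\twopda\to\qptm$ direction the paper keeps a contiguous tape representation and handles push-without-pop and pop-without-push by shifting tape contents (cases (iv)--(v)), while you suggest two stacks growing outward from a fixed origin; either works, but note that your layout still needs the head to shuttle between the two tops, and you should be careful that the ``read only top of stack one'' convention of \cref{def:two-stack-wpda} is respected when you emit the weighted transition.
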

\begin{proof}[Proof]
    The proof that any $\ptm$ has a strongly equivalent $\twopda$ closely follows the proof that any (unweighted) deterministic TM can be simulated by a two-stack PDA \citep[Thm. 8.13;][]{hopcroft01}.
    \begin{figure}
        \centering
        \includegraphics[width=0.7\textwidth]{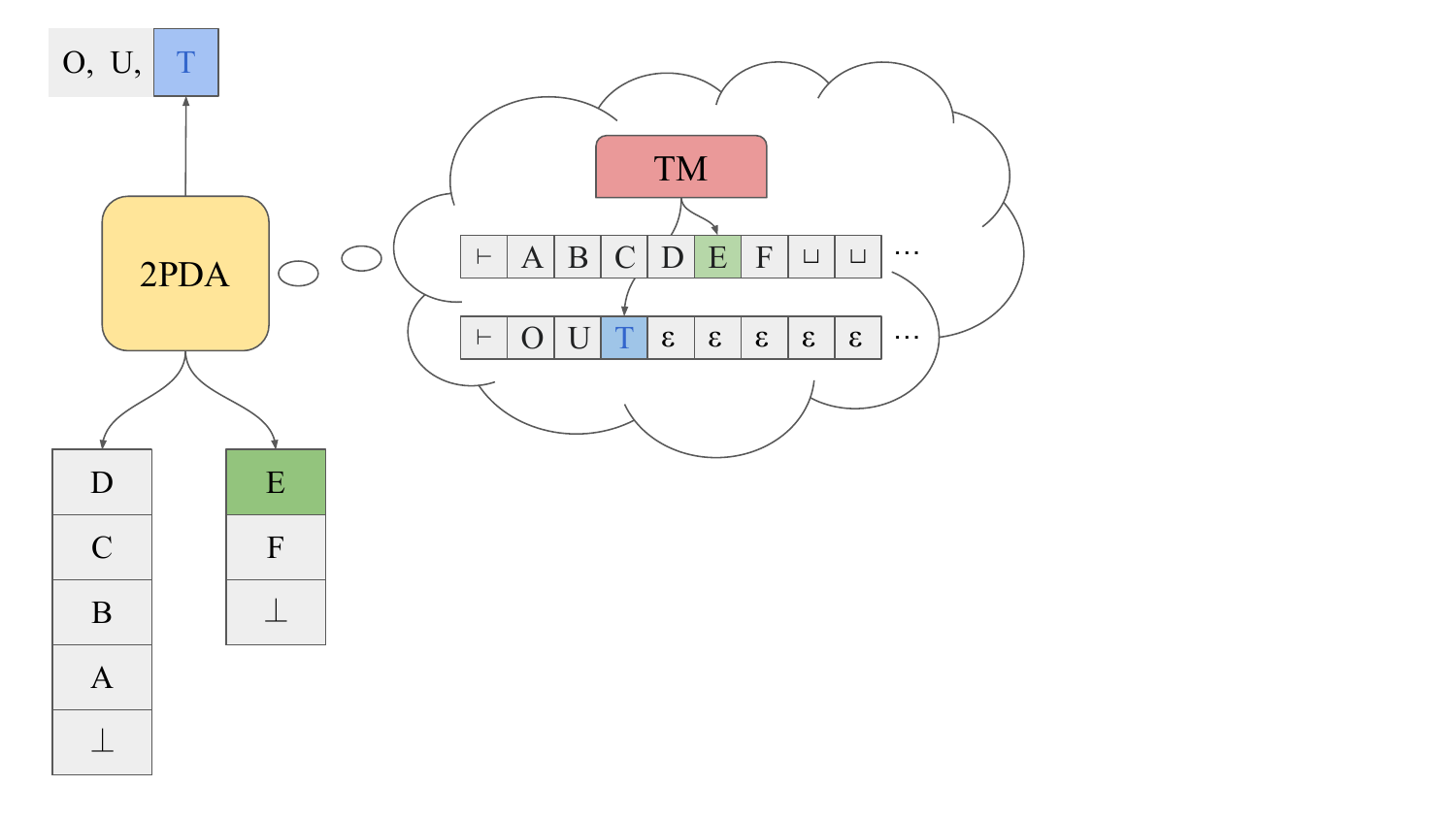}
        \caption{Simplified view of a 2PDA simulating a TM.}
        \label{fig:2pda TM}
    \vspace{-15pt}
    \end{figure}
    The idea is to use the two stacks in tandem to simulate the TM's infinite tape.
    The first stack contains the symbols to the right of the TM's head and the top symbol on the first stack is the tape symbol under the TM's head.
    The second stack contains the symbols to the left of the head.
    See \cref{fig:2pda TM} for a visualization.
    The extension to the probabilistic case, using the introduced definitions of $\qptm$s and $\twopda$, is straightforward; see \cref{appendix:qptm-twopda}.\looseness=-1
\end{proof}

\subsection{Simulating Unweighted TMs}\label{sec:unweighted-case}
Before we introduce the equivalence of the models in the probabilistic case, we review the classical unweighed construction of an RNN simulating a TM first introduced by \citet{siegelmann-sontag-1992} and simplified by \citet{Chung2021}. 
Specifically, \citet{siegelmann-sontag-1992} show that a simple RNN can encode a TM by simulating a deterministic unweighted $\twopda$.\footnote{Naturally, as \cref{prop:qptm-pda} suggests, \emph{unweighted} two-stack PDAs are equivalent to unweighted TMs \citep{sipser13}.}
This $\twopda$ takes an input string $\str$ and maps it to the output $\tmFun{\str}$ given by the simulated Turing machine:
\begin{align}
   \! \tmFun{\str} \defeq 
    \begin{cases}
        {\footnotesize \textsf{true}} & \ifcondition \tm \text{ accepts } \str\\
        {\footnotesize \textsf{false}} & \ifcondition \tm \text{ rejects } \str\\
        \undefined & \ifcondition \tm \text{ does not halt on } \str
    \end{cases}
\end{align}
Given a deterministic unweighted $\twopda$, the construction defines an RNN that halts and stores the acceptance of $\str$ by the $\twopda$ in a specific neuron of the RNN, or never halts if $\tmFun{\str} = \undefined$.

The crux of the construction lies in encoding the content of a stack in a neuron.\footnote{A neuron is a term of art for a component of the hidden state vector.}
Importantly, the encoding must be such that \begin{enumerate*}[label=\textit{(\roman*)}]
    \item the top of the stacks can easily be read and
    \item the encoding of the stack can easily be updated upon popping off or pushing onto the stack.
\end{enumerate*}
This can, for example, be achieved by mapping a (binary) string $\stackseq = \stacksym_1\ldots\stacksym_\strlen$ into $\toStackEncFun{\stackseq} \defeq 0.\toStackEncFun{\stacksym_\strlen}\ldots\toStackEncFun{\stacksym_1}$, where:\footnote{Note that the stack encoding defined by \citet{siegelmann-sontag-1992} is somewhat different since it does not use the base-ten but rather base-four encoding. To keep the presentation simpler, we choose the base-ten one; the intuition remains the same.\looseness=-1}\looseness=-1
\begin{equation}
    \toStackEncFun{\stacksym} \defeq \begin{cases}
        1 & \ifcondition \stacksym = 0 \\
        3 & \otherwisecondition
    \end{cases}
\end{equation}
Notice the opposite orientation of the two encodings: The top of the stack in $\stackseq$ is written on the right-hand side while it is the left-most digit in the numerical encoding which enables easy updates to the encoding; with this, popping $\stacksym_\strlen = 0$ can, for example, be performed by computing $\sigmoidFun{10 \cdot \toStackEncFun{\stackseq} - 1}$, popping $\stacksym_\strlen = 1$ by computing $\sigmoidFun{10 \cdot \toStackEncFun{\stackseq} - 3}$, pushing $\stacksym = 0$ by computing $\sigmoidFun{\frac{1}{10} \cdot \toStackEncFun{\stackseq} + \frac{1}{10}}$, and pushing $\stacksym = 1$ by computing $\sigmoidFun{\frac{1}{10} \cdot \toStackEncFun{\stackseq} + \frac{3}{10}}$.\footnote{Since we are using a simple RNN, the coefficient cannot be chosen based on the current input. This is why \emph{all} such actions are performed in parallel (into individual processing neurons). The result of the correct operation (based on the input symbol and the current stack configuration) can then be copied back into the stack neuron. This is why multiple RNN update sub-steps are required.}

Similarly, the current state of a $\twopda$ is stored in a set of neurons keeping the one-hot encoding of the state, which is updated by simulating the transition function of the $\twopda$.
This can be done by intersecting the states reachable from the current configuration of the $\twopda$ and the states reachable by the currently read symbol, the same way as in the classical Minsky construction of a simple RNN simulating a finite-state automaton \citep{Minsky1954}.
Because of the determinism of the transition function, this results in a single possible next state.
The intersection can be implemented using conjunction, which is possible using the saturated sigmoid function.\looseness=-1

With this, an RNN simulating a $\twopda$ can be constructed by keeping a hidden state vector divided into multiple sets of values, three of which will be relevant for our extension:
\begin{enumerate*}[label=(\arabic*)]
    \item Two \defn{stack neurons}, each representing a stack;
    \item Two \defn{readout neurons}, each encoding the symbols on top of one of the stacks;
    \item $\nstates$ \defn{state neurons} encoding the current state of the $\twopda$.
\end{enumerate*}
The readout neurons can be computed from the stack encodings $\toStackEncFun{\stackseq_1}$ and $\toStackEncFun{\stackseq_2}$ similarly to how the stack encodings are updated.
See top of \cref{fig:rnn-ptm} for an illustration of how these components can be used to determine the quantities relevant to determining the next action of the $\twopda$.
More details of the construction can be found in \citet[Thm. 1]{Chung2021}.\looseness=-1

\begin{figure}[ht!]
        \centering
        \begin{tikzpicture}
                    
            \coordinate (A) at (1.5, 1.4);
            \coordinate (B) at (1.5, -0.2);
        
            \node[align=center] (sm) {
                $\hiddState = \begin{pmatrix}
                    \toStackEncFun{\stackseq_1} \\
                    \onehot{\textcolor{ETHRed}{\texttt{top}\left(\boldsymbol{\gamma}_1\right)}} \\
                    \toStackEncFun{\stackseq_2} \\
                    \onehot{\texttt{top}\left(\stackseq_2\right)} \\
                    \onehot{\textcolor{ETHRed}{q}} \\
                    \evh_{\idxd'} \\
                    \vdots \\
                    \evh_\hiddDim
                \end{pmatrix}$
            }; 

            \node[right of = sm, xshift=25mm,yshift=10mm] (oh) {$\onehot{\textcolor{ETHRed}{\texttt{top}\left(\boldsymbol{\gamma}_1\right)}, \textcolor{ETHRed}{q}}$};
        
            \draw[-Latex] (A) to[bend left] (oh.north);
            \draw[-Latex] (B) to[bend right] (oh.south);
            
            \node[below of = oh, yshift=-27mm, xshift=-20mm] (om) {$\outMtx\!\cdot\!\onehot{\textcolor{ETHRed}{\texttt{top}\left(\boldsymbol{\gamma}_1\right)}, \textcolor{ETHRed}{q}} = \pdens_{\scaleto{\twopda}{4pt}}\left(\cdot \mid \textcolor{ETHRed}{\texttt{top}\left(\boldsymbol{\gamma}_1\right)}, \textcolor{ETHRed}{q}\right)$};
            
            \draw[-Latex] (oh.south) to[bend left] (om.north);
        \end{tikzpicture}
        \caption{A schematic illustration of how the model from \citet{Chung2021} stores the information about the configuration of the $\twopda$ and how it can be used to access the information needed for defining string probabilities.
        We denote with $\onehot{\cdot}$ the one-hot encoding function of the input arguments.
        $\evh_{\idxd'} \ldots \evh_\hiddDim$ refer to the rest of the hidden state not directly relevant for determining the configuration of the $\twopda$.}
        \label{fig:rnn-ptm}
    \end{figure}

Importantly, note that in this case, the RNN (and the $\twopda$) can be fully deterministic due to the equivalence of deterministic and non-deterministic unweighted TMs. 
They also do not have to consider any $\eps$-transitions or steps generating $\eps$'s since there is no generation in the sense of \cref{note:generative}.
These two aspects will, however, require more attention in the probabilistic case, which we discuss next.\looseness=-1

\vspace{-0.1em}
\subsection{Simulating $\ptm$s}\label{sec:upper-probabilistic}
A TM can perform an unbounded number of computational steps per output symbol.
To account for this with $\rnnlm$s in the language modeling setting, we extend their definition to one that allows 
generating $\eps$'s, effectively allowing RNNs to perform computations without affecting the output string ($\epsrnnlm$).\looseness=-1
\begin{definition}\label{def:epsrnn}
An \defn{$\rnnlm$ with $\eps$-transitions} ($\epsrnnlm$) is an $\rnnlm$ that can output $\eps$-symbols.\looseness=-1
\end{definition}
More precisely, an $\epsrnnlm$ defines a symbol representation function $\inEmbedding\colon \epsalphabet \to \Q^\embedDim$ and the output matrix $\outMtx \in \Q^{|\eosepsalphabet| \times \hiddDim}$, where $\hiddDim$ and $\embedDim$ are parameters depending on the $\twopda$ \citep{Chung2021}, {and $\eosepsalphabet=\eosalphabet\cup\{\eps\}$.
The $\eps$-symbols represent empty substrings, so the final output of the $\epsrnnlm$ is the output string with $\eps$'s removed.
Effectively, this gives an $\epsrnnlm$ the possibility to perform an arbitrary number of computations per 
symbol of the string.
With this additional gadget, we are able to state our main result establishing a close connection between $\ptm$s and $\epsrnnlm$s.\looseness=-1

\paragraph{On determinism.} 
The construction we describe in the following theorem requires that the next transition of the $\twopda$ is fully specified given the current state $(\stateq, \stacksym)$ and the (sampled) output symbol from $\eosepsalphabet$.\footnote{Note that this does \emph{not} make the resulting $\twopda$ unambiguous in the sense that any given string can only be produced one way. 
There is still the potential for ambiguity because, in a given configuration, the $\twopda$ could either produce a certain symbol $\sym$, or it could move to a different configuration via an $\eps$-transtion and then produce the same symbol $\sym$ via a different transition.\looseness=-1}
That is, the non-determinism of the simulated $\twopda$ is constrained to the sampling step of the $\rnnlm$, meaning there can only be one possible transition in the $\twopda$ per output symbol.
We call a $\twopda$ or $\qptm$ that has this property \defn{$\alphabet$-deterministic}.
Note that this is still a non-deterministic automaton; see \cref{def:deterministic-twopda}.
\begin{theorem}[Informal]\label{thm:pda-epsrnn}
For every $\alphabet$-deterministic probabilistic $\twopda$, there is a strongly equivalent $\epsrnnlm$. 
\end{theorem}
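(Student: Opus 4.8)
The plan is to upgrade the unweighted construction of \citet{siegelmann-sontag-1992} and \citet{Chung2021} reviewed in \cref{sec:unweighted-case} along two axes: the simulated machine becomes a \emph{probabilistic} $\twopda$ that is only $\alphabet$-deterministic rather than deterministic and unweighted, and the $\epsrnnlm$ must \emph{read out} the $\twopda$'s next-symbol distribution from its hidden state so that its own sampling step reproduces it. Fix an $\alphabet$-deterministic $\twopda\ \pushdown = \twostackgppdatuple$. By $\alphabet$-determinism (\cref{def:deterministic-twopda}), for every configuration $(\stateq,\stacksym)$ and every $\sym\in\eosepsalphabet$ there is at most one transition; hence, for each $(\stateq,\stacksym)$, the weighting $\trans$ factors into a distribution $\pdens_{\scaleto{\twopda}{4pt}}(\cdot\mid\stateq,\stacksym)$ over $\eosepsalphabet$ (which sums to one by the local-normalization requirement on $\trans$), followed by the unique, deterministic stack-and-state update associated with $(\stateq,\stacksym,\sym)$. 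We interpret reaching $\qfinal$ as emitting $\eos$, setting $\pdens_{\scaleto{\twopda}{4pt}}(\eos\mid\qfinal,\stacksym)=1$ for every $\stacksym$, so that halting of $\pushdown$ becomes halting of the $\epsrnnlm$.

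First I would reuse the hidden-state layout of \citet{Chung2021} sketched in \cref{fig:rnn-ptm}: two stack neurons holding the numeric encodings $\toStackEncFun{\stackseq_1},\toStackEncFun{\stackseq_2}$, two readout neurons holding the one-hot encodings of the two stack tops, and $\nstates$ state neurons holding the one-hot encoding of $\stateq$, all initialized to $\pushdown$'s start configuration ($\qinit$ and both stacks empty). Since a $\twopda$ transition depends only on the state and the top of the first stack, a saturated-sigmoid conjunction (as in the Minsky construction \citep{Minsky1954}) of the stack-$1$ readout neurons with the state neurons produces, in $|\stackalphabet|\cdot\nstates$ auxiliary neurons, the one-hot vector over $\stackalphabet\times\states$ that indexes the current configuration. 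I would then choose the output matrix $\outMtx\in\Q^{|\eosepsalphabet|\times\hiddDim}$ that is zero on all other blocks and whose column indexed by $(\stacksym,\stateq)$ equals $\pdens_{\scaleto{\twopda}{4pt}}(\cdot\mid\stateq,\stacksym)$; then $\outMtx\hiddStatet$ already lies on $\SimplexEosEpsalphabetminus$, so the projection $\proj$ may be taken to be the identity there and $\pLNSM(\sym\mid\strlt)=\pdens_{\scaleto{\twopda}{4pt}}(\sym\mid\stateq,\stacksym)$ holds exactly; this is the computation already depicted in \cref{fig:rnn-ptm}.

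Next I would realize one $\twopda$ transition as a fixed-length block of RNN steps split into two phases, tracked by a bounded phase counter in the hidden state. In the single \emph{emission} step the hidden state encodes the current $(\stateq,\stacksym)$, the symbol $\sym$ is sampled from $\outMtx\hiddStatet$ and passed back through the symbol-representation function $\inEmbedding\colon\epsalphabet\to\Q^\embedDim$. In the subsequent \emph{update} steps the RNN, exactly as in the unweighted construction, computes in parallel all candidate pop/push operations on the two stack neurons and all candidate next states, then gates the correct outcome back into the stack and state neurons via a conjunction on the one-hot encodings of $\stateq$, $\stacksym$, and the just-sampled $\sym$; this parallel-then-select idiom is forced because a simple RNN's weights cannot depend on the current input. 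During the update steps the output distribution is hard-wired to place all mass on $\eps$, so these steps contribute nothing to the yield, and whenever the state neurons encode $\qfinal$ the emission step deterministically outputs $\eos$ and the model halts.

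Finally I would establish strong equivalence. The construction gives a bijection between halting paths of $\pushdown$ and halting runs of the $\epsrnnlm$: a $\twopda$ transition out of $(\stateq,\stacksym)$ emitting $\sym$ corresponds to one block of RNN steps that emits $\sym$ once and $\eps$ otherwise, and reaching $\qfinal$ corresponds to emitting $\eos$; conversely every run arises this way, because the RNN tracks the configuration deterministically and its only randomness is the emission step, which by $\alphabet$-determinism mirrors the $\twopda$'s transition choice. This bijection is yield-preserving (the padding $\eps$'s and the $\eps$'s emitted for $\twopda$ $\eps$-transitions all vanish) and weight-preserving (an emission step contributes $\pdens_{\scaleto{\twopda}{4pt}}(\sym\mid\stateq,\stacksym)$, which equals the weight of the unique matching transition, while update steps contribute $1$), so by \cref{def:equivalence} the two models are strongly equivalent; in particular, summing over halting paths with a common yield gives ${\prob}_{\pushdown}(\str)=\pLNSM(\str)$ for every $\str\in\kleene{\alphabet}$, so they are also weakly equivalent. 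The step I expect to be the main obstacle is the bookkeeping of the update phase: making the fixed-weight simple RNN route exactly the correct parallel-computed result back into each stack and state neuron as a joint function of the current configuration and the sampled symbol, while advancing the phase counter and leaving untouched every neuron that must persist across the steps of a block.
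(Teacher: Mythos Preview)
Your proposal is correct and follows essentially the same approach as the paper: reuse \citeposs{Chung2021} controller to track the configuration, form the one-hot encoding of $(\stacksym,\stateq)$ via a saturated-sigmoid conjunction sub-step, define $\outMtx$ columnwise as the $\twopda$'s next-symbol distributions (with $\eos$ forced at $\qfinal$), take $\proj$ to be the identity (the paper uses $\mathrm{sparsemax}$, which coincides on the simplex), and obtain strong equivalence from the path bijection that $\alphabet$-determinism makes trivial. Your handling of the multi-phase block---the explicit phase counter and the forced $\eps$-output during update sub-steps---is in fact more careful than the paper's informal treatment, which simply absorbs these sub-steps into Chung's construction without spelling out what is emitted during them.
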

\begin{proof}[Rough idea]
    Given a $\alphabet$-deterministic $\twopda$ $\pda$,
    we design an $\epsrnnlm$ $\rnn$ that simulates $\pda$ by executing its transitions and hence defining the same semimeasure over strings.
    We use the LM controller from \citet{Chung2021} (with the same definitions of the parameters $\recMtx$, $\inMtx$, and $\biasVech$), as it conveniently models the transitions of $\pda$ and exposes the parts of its configuration required to define the transition (and with it the string) probabilities.
    Note the additional $\varepsilon$'s do not change the construction.
    This leaves us with the task of appropriately defining the output matrix $\outMtx$.
    Exposing the symbols on the top of the stacks, $\texttt{top}\left(\stackseq_1\right)$ and $\texttt{top}\left(\stackseq_2\right)$, and the current state $\stateq$ of $\pda$ in $\hiddStatet$ (cf. \cref{fig:rnn-ptm}) allow us to easily access the appropriate probabilities encoded in the output matrix $\outMtx$.
    Note that due to the $\alphabet$-determinism of $\pda$, a single pair of the stack symbol and the current state $\left(\texttt{top}\left(\stackseq_1\right), \stateq\right)$ determines the conditional probability measure over the next symbol.\footnote{Recall that the target configuration of $\pda$ depends only on the top symbol of the first stack, $\stacksym \defeq \texttt{top}\left(\stackseq_1\right)$.}
    More precisely, we define $\outMtx \in \Q^{|\eosepsalphabet| \times |\epsstackalphabet|\nstates}$ which maps the one-hot encoding of the pair $\left(\stacksym, \stateq\right)$ to a $|\eosepsalphabet|$-dimensional vector of probabilities over the next symbol.\footnote{One-hot encodings of the state-stack symbol pairs can be obtained by applying the RNN update (sub-)step in which the nonlinearity is used to implement conjunction. This adds another sub-step to the simulation of the full $\twopda$ update step.\looseness=-1}
    To achieve that, simply let $\eOutMtx_{\sym, \left(\stacksym, \stateq\right)}$ correspond to $\transFun{\twoPdaEdge{\stateq}{\sym,\stacksym}{\circ}{\circ}{\circ}{\circ}{\circ}}$, where we index the output matrix directly with the elements for cleaner notation.\footnote{Again, due to the assumed determinism, the elements with $\circ$ are irrelevant for the weights.}
    Denoting with $\onehot{\stacksym, \stateq}$ the one-hot encoding of the tuple $\left(\stacksym, \stateq\right)$, the vectors $\outMtx \onehot{\stacksym, \stateq}$ represent semimeasures over $\eosepsalphabet$, and $\proj$ can be set to the identity function.\footnote{The identity function is generally \emph{not} a projection function onto the probability simplex. However, since its inputs in this case already lie on the probability simplex, its use is possible. More generally, we could use the sparsemax function \citep{sparsemax}, which acts like the identity function on the probability simplex. 
    Alternatively, we could use the more popular softmax function and set the entries of $\outMtx$ to the logarithms of the original probabilities (defining $\log{0} \defeq -\infty$).\looseness=-1}
    Considering that $\rnn$ directly simulates all possible paths of $\pda$, it is easy to see that $\rnn$ generates a string $\str$ with a sequence of actions if and only if $\pda$ generates it as well.
    Moreover, the encoding of the probabilities in $\outMtx$ means that the probabilities of the action sequences are always the same.\looseness=-1
\end{proof}
A formal statement of the theorem is given in \cref{sec:proof-anej}.
We provide a proof where we show that the correspondence between the paths produced by \citeposs{Chung2021} construction and the definition of $\outMtx$ as described above results in a trivial weight- and yield-preserving mapping between the paths of the $\twopda$ $\pda$ and the $\epsrnnlm$ $\rnn$ that simulates it.
Together, this shows that the two machines are strongly equivalent.\footnote{Note that the bijection between paths is trivial in the case of a real-time $\alphabet$-deterministic $\twopda$ because there is no ambiguity over output symbols for any given configuration.} 
This construction is implemented in \url{https://github.com/rycolab/rnn-turing-completeness}.

Finally, we can show that the expressivity of $\epsrnnlm$s is bounded from above by that of a $\twopda$.
\begin{restatable}{proposition}{epsrnnpda}
    \label{prop:epsrnn-twopda}
    Every $\epsrnnlm$ has a weakly equivalent $\twopda$.
\end{restatable}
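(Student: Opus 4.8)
The plan is to route through a probabilistic Turing machine: I will show that an arbitrary $\epsrnnlm$ is weakly equivalent to a $\ptm$, and then appeal to \cref{prop:ptm-qptm} and \cref{prop:qptm-pda} to conclude that it is weakly equivalent to a $\twopda$. The starting observation is that an $\epsrnnlm$ $\rnn$ (with parameters $\recMtx, \inMtx, \biasVech$, embedding $\inEmbedding\colon\epsalphabet\to\Q^\embedDim$, output matrix $\outMtx$, and projection $\proj$) induces a very simple generative process on a countable state space. It maintains the hidden state $\hiddStatet\in\Q^\hiddDim$, which stays rational at every step because $\recMtx,\inMtx,\biasVech$ are rational and the saturated sigmoid maps rationals to rationals (the numerators and denominators may grow without bound, however, which is exactly why a finite-state machine does not suffice and the full tape/stack is needed). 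At step $\tstep$ it samples $\sym_\tstep\in\eosepsalphabet$ from the finitely supported distribution $\projFunc{\outMtx\hiddStatet}$; it appends $\sym_\tstep$ to its output if $\sym_\tstep\in\alphabet$, appends nothing if $\sym_\tstep=\eps$, halts if $\sym_\tstep=\eos$, and otherwise updates $\hiddStatet\leftarrow\sigmoidFun{\recMtx\hiddStatet+\inMtx\inEmbeddingFun{\sym_\tstep}+\biasVech}$ and repeats. By the remark in \cref{sec:rnn-lm} (and \cref{appendix:rnn-halt-prob}), the semimeasure $\pLNSM(\str)$ is precisely the probability that this process halts having emitted $\str$.

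Next I would construct a $\ptm$ $\tm$ (or, equivalently, a $\qptm$) that runs this process faithfully on its working tape, storing an encoding of the current hidden state as a tuple of rationals, each as a pair of binary integers. One macro-step of $\tm$ proceeds as follows. First, by deterministic sub-computation, $\tm$ computes $\outMtx\hiddStatet$ and applies $\proj$ to obtain the next-symbol distribution; assuming $\proj$ returns rational values on rational inputs — e.g.\ $\sparsemax$ or an argument-normalizing map, as already used in the proof of \cref{thm:pda-epsrnn} — this is a rational distribution $(p_\sym)_{\sym\in\eosepsalphabet}$ with a common denominator $N$ that $\tm$ writes to the tape. Second, $\tm$ samples $\sym_\tstep$ from $(p_\sym)$ using its fair-coin branching: it generates the bits of a uniform number in $[0,1]$ and compares it against the cumulative sums $\sum_{\sym'\le\sym}p_{\sym'}$, committing to $\sym$ once the comparison is decided (equivalently, rejection-samples a uniform integer in $\{1,\dots,N\}$ and reads off the corresponding interval). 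Third, $\tm$ emits $\sym_\tstep$ on the output tape if $\sym_\tstep\in\alphabet$, emits nothing if $\sym_\tstep=\eps$, moves to $\qfinal$ if $\sym_\tstep=\eos$, and otherwise overwrites the stored hidden state with $\sigmoidFun{\recMtx\hiddStatet+\inMtx\inEmbeddingFun{\sym_\tstep}+\biasVech}$ (a finite rational computation) and loops. Invoking \cref{prop:ptm-qptm} and \cref{prop:qptm-pda} then replaces $\tm$ by a weakly equivalent $\twopda$ $\pda$.

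For correctness, each macro-step of $\tm$ reproduces exactly the distribution over the next emitted symbol and over halting that $\rnn$ prescribes, and the stored hidden state evolves identically to $\rnn$'s, so by induction on the number of macro-steps the probability that $\tm$ halts with any fixed output $\str$ equals $\pLNSM(\str)$; composing with the equivalences gives $\tmprob_\pda(\str)=\pLNSM(\str)$ for every $\str\in\kleene{\alphabet}$, i.e.\ weak (not strong, since the sampling gadget introduces many paths with no $\rnn$ counterpart) equivalence. The step I expect to be the main obstacle is precisely this inner sampling routine: implementing "draw from an arbitrary, unbounded-precision rational distribution" with only the fixed finite transition structure of a probabilistic machine, and arguing that the routine terminates with probability one — which holds because the comparison of a uniform bit-stream against a rational threshold is resolved after finitely many flips almost surely — so that no spurious probability mass is lost and the two semimeasures coincide exactly rather than only off a null set. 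A secondary subtlety worth flagging is the hypothesis on $\proj$: the argument genuinely needs $\proj$ to send rationals to rationals, which is why one restricts to (or notes the use of) projections such as $\sparsemax$ rather than $\softmax$.
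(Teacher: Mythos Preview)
Your argument is correct, but it is not the route the paper takes. The paper's proof is \emph{semantic}: it shows that the string semimeasure $\str\mapsto\pLNSM(\str)$ of any rationally weighted $\epsrnnlm$ is lower semi-computable (by enumerating the countably many $\eps$-padded runs yielding $\str$ and summing their rational weights), and then invokes \citeposs{icard2020calibratinggm} characterization that $\ptm$s define exactly the enumerable semimeasures; the passage to $\twopda$ then goes through \cref{prop:ptm-qptm} and \cref{prop:qptm-pda} just as in your last step. Your argument is instead \emph{operational}: you build a concrete $\ptm$ that stores $\hiddStatet$ on the work tape, recomputes the next-symbol distribution, and samples from it with a Knuth--Yao style fair-coin gadget. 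What your approach buys is an explicit machine and no dependence on the Icard black box; what the paper's approach buys is brevity and a cleaner handling of the ``lost mass'' issue, since lower semi-computability sidesteps the almost-sure termination analysis of the sampling loop entirely. Both proofs share the same hidden hypothesis you rightly flag, namely that $\proj$ sends rationals to rationals (the paper's line ``because $\rnn$ is rationally weighted, every $r_n$ is rational'' uses this implicitly); with $\softmax$ neither argument goes through as written, though the paper's version is closer to being repairable via rational lower approximations of the computable real $p(\widehat{\str}_i)$.
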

\begin{proof}
    For the proof, see \cref{appendix:icard3}.
\end{proof}

\section{A Lower Bound} \label{sec:lower-bounds}

While \cref{thm:pda-epsrnn} establishes a concrete result on the expressive power of $\epsrnnlm$s, the result follows from somewhat unrealistic assumptions, namely rationally weighted networks and unbounded computation time.
We contend the first assumption is a reasonable approximation, since even for small neural networks the number of expressible states can be large; assuming double precision floating point numbers, an RNN can yield as many as $2^{64\cdot\hiddDim}$ different states, where $\hiddDim$ is the number of neurons.\footnote{To use the state space with limited precision more effectively, one could add more stack-encoding neurons or choose more efficient encodings of the stack contents.\looseness=-1}
However, $\rnnlm$s used in practice operate in real-time, outputting a symbol at every computation step.
To make our analysis closer to this use case, in this section, we develop a lower bound on the expressivity of an $\rnnlm$ under the real-time restriction while still allowing rational arithmetic operations.\looseness=-1

\subsection{Real-time $\rnnlm$s}

Now, we switch back to studying the more common $\rnnlm$ with an RNN controller based on \cref{def:elman-rnn}.
Firstly, note that the class of $\rnnlm$s is a subset of the class of $\epsrnnlm$s:
\begin{proposition}\label{prop:epsrnn-rnn}
For every $\rnnlm$ there exists a strongly equivalent $\epsrnnlm$.
\end{proposition}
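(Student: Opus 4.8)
The plan is to observe that, by \cref{def:epsrnn}, an $\epsrnnlm$ is nothing more than an $\rnnlm$ endowed with the \emph{option} of emitting $\eps$ — an option it need never exercise. So, given an arbitrary $\rnnlm$ $\rnn$ with hidden dimension $\hiddDim$, recurrence parameters $\recMtx$, $\inMtx$, $\biasVech$, input embedding $\inEmbedding\colon\alphabet\to\Q^\embedDim$, output matrix $\outMtx\in\Q^{|\eosalphabet|\times\hiddDim}$, and projection $\proj$, I would construct an $\epsrnnlm$ $\rnn'$ that has exactly the same hidden-state dynamics (same $\hiddDim$, $\recMtx$, $\inMtx$, $\biasVech$), extends the input embedding to $\epsalphabet$ by setting $\inEmbedding(\eps)$ to an arbitrary value (e.g.\ $\vzero$; its choice is immaterial), and extends the output matrix to $\outMtx'\in\Q^{|\eosepsalphabet|\times\hiddDim}$ by appending a single new row for $\eps$.

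The only choice requiring a moment's thought is the output projection. I need the next-symbol distribution of $\rnn'$ to place probability exactly $0$ on $\eps$ at every step and to coincide with that of $\rnn$ on $\eosalphabet$. This is easy to arrange: take $\proj'$ to be a projection whose image always has a $0$ in the $\eps$-coordinate and otherwise agrees with $\proj$ — e.g.\ the map that discards the $\eps$-coordinate of its argument, applies $\proj$, and re-pads the result with a $0$ (the $\eps$-row of $\outMtx'$ then being irrelevant); or, as alternatives, $\sparsemax$ with the $\eps$-row chosen so it is zeroed out, or $\softmax$ under the convention $\log 0\defeq-\infty$ for the $\eps$-logit (the very trick noted in the footnote to \cref{thm:pda-epsrnn}). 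Under any of these, $\rnn'$ assigns probability $0$ to $\eps$ and probability $\projFunc{\outMtx\hiddStatet}_{\sym}$ to each $\sym\in\eosalphabet$, after any history $\strlt$ — i.e.\ exactly the conditional distribution of $\rnn$.

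With this in hand the equivalence is immediate. Because $\rnn'$ assigns zero probability to $\eps$, it never emits $\eps$; hence $\inEmbedding(\eps)$ is never fed back, and every generation path of $\rnn'$ is, symbol for symbol and step for step, a generation path of $\rnn$ carrying the same per-step probabilities, and conversely. The $\eps$-removal that turns a path's raw output into its final string acts as the identity on such paths, so the identity map is a weight-preserving, yield-preserving bijection between the halting paths of $\rnn$ and those of $\rnn'$; in particular the two assign the same probability to every $\str\in\kleene{\alphabet}$. By \cref{def:equivalence}, $\rnn$ and $\rnn'$ are strongly equivalent.

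I do not expect a genuine obstacle: the statement is in essence the bookkeeping observation that the $\rnnlm$ class sits inside the $\epsrnnlm$ class. The single point deserving explicit mention is that the output projection must be capable of assigning probability exactly $0$ to $\eps$, which precludes a naive use of $\softmax$ but is handled by any of the fixes above.
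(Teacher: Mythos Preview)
Your proposal is correct and takes essentially the same approach as the paper: both observe that an $\rnnlm$ is simply an $\epsrnnlm$ that always assigns probability $0$ to $\eps$. The paper dispatches this in a single sentence by invoking \cref{def:epsrnn}, whereas you spell out the bookkeeping (extending $\inEmbedding$, appending an $\eps$-row to $\outMtx$, fixing $\proj$); your extra care about the projection is not wrong, but it is more detail than the result requires.
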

\begin{proof}
    This result follows trivially from \cref{def:epsrnn}:
    An $\rnnlm$ is simply an $\epsrnnlm$ that always assigns probability $0$ to outputting $\eps$.
\end{proof}

\subsection{Real-time Deterministic $\twopda$}

The lack of $\eps$-transitions requires the properties of the simulated model to change: As in \cref{thm:pda-epsrnn}, the RNN construction requires that there is only one transition for every output symbol and configuration. 
Previously, this was done by imposing $\alphabet$-determinism, where non-determinism over symbols at a given time step can be reintroduced by delaying transitions through the use of additional $\eps$-transitions, which is not possible here.
In fact, the lack of $\eps$'s and binarization means the resulting PDA has to be not just real-time, but also deterministic.
We define such a $\twopda$ analogously to the single stack case:\looseness=-1%
\footnote{Real-time deterministic PDA have previously been investigated \citep[\textit{inter alia}]{harrison-havel-1972, pitll-yehudai-constructing}, but to the authors' knowledge, this has not been extended to the two-stack case.}

\begin{definition}\label{def:deterministic-twopda}
    A $\twopda$ is \defn{deterministic} if:
    \begin{itemize}[itemsep=0.5pt,parsep=0.5pt]
        \item For any current state $\stateq\in\states$, current top stack symbol $\tapesym\in\stackalphabet$, and a given output symbol $\sym\in\epsalphabet$, there is at most one transition with non-zero probability.
        \item If, in a given computation step, the weight of an $\eps$-transition is non-zero, then its weight is $1$ and the weight of all other transitions is $0$.\looseness=-1
    \end{itemize}
    If $\trans(\stateq,\eps,\tapesym)= 0$ for all $\stateq\in\states,\tapesym\in\tapealphabet$ then we say it is a \defn{real-time deterministic probabilistic $\twopda$} ($\rdpda$).\looseness=-1
\end{definition}

\begin{corollary}
    \label{thm:rnn-rdpda}
    $\rnnlm$s can simulate $\rdpda$.
\end{corollary}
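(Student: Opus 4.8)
The plan is to read \cref{thm:rnn-rdpda} off of \cref{thm:pda-epsrnn}, the only extra work being to show that when the $\twopda$ is an $\rdpda$ the recurrent model produced by that theorem never actually emits $\eps$ and is therefore an ordinary $\rnnlm$. First I would observe that a deterministic $\twopda$ in the sense of \cref{def:deterministic-twopda} is a fortiori $\alphabet$-deterministic: fixing the current state, the top symbol of the first stack, and the emitted symbol $\sym\in\epsalphabet$ leaves at most one transition with non-zero weight, which is exactly the hypothesis of \cref{thm:pda-epsrnn}. Hence that theorem applies and yields, for any $\rdpda$ $\pda$, a strongly equivalent $\epsrnnlm$ $\rnn$ built from \citeposs{Chung2021} controller (giving $\recMtx,\inMtx,\biasVech$) together with the output matrix $\outMtx$ whose column indexed by $\onehot{\texttt{top}(\stackseq_1),\stateq}$ collects the probabilities $\pda$ assigns to the transitions out of that configuration, and $\proj$ set to the identity.

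Second, I would locate the $\eps$-outputs of $\rnn$. Because $\pda$ is an $\rdpda$ we have $\trans(\stateq,\eps,\tapesym)=0$ for all $\stateq,\tapesym$, so every $\eps$-indexed entry of $\outMtx$ vanishes and $\rnn$ never places probability on $\eps$ in order to mirror a move of $\pda$. The only $\eps$'s left are the ones the controller emits on the constant number of internal update sub-steps that \citeposs{Chung2021} construction uses to (i) read the stack tops off the stack encodings, (ii) assemble the one-hot of the configuration, and (iii) compute all candidate stack operations in parallel before copying the selected one back. To finish I would collapse these sub-steps into a single application of the update rule \eqref{eq:elman-update-rule}, so that $\rnn$ outputs a genuine symbol of $\alphabet$ at every step; an $\epsrnnlm$ that never emits $\eps$ is precisely an $\rnnlm$ (cf.\ the proof of \cref{prop:epsrnn-rnn}), and the resulting one-step-per-move correspondence is weight- and yield-preserving, so one in fact obtains strong equivalence in the sense of \cref{def:equivalence}.

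The collapsing step is where the real work lies, and I would carry it out by enlarging the hidden state to keep all quantities needed for the next output and the next update precomputed: the stack encodings $\toStackEncFun{\stackseq_1},\toStackEncFun{\stackseq_2}$; the one-hot configuration $\onehot{\texttt{top}(\stackseq_1),\stateq}$ (so the output distribution is a linear readout of it, as in \cref{fig:rnn-ptm}); a bounded window of the top symbols of each stack together with one extra look-ahead digit of each stack encoding; and, for each $\sym\in\alphabet$, the linearly determined data of the transition selected by $\sym$ from the current configuration. With this bookkeeping each former sub-step becomes either an affine map or a single conjunction acting on a fresh block of neurons, and a constant number of them fits into one update of the form \eqref{eq:elman-update-rule} with a larger but input-length-independent hidden dimension --- the usual trade of depth for width. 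The delicate point, which I expect to be the main obstacle, is the readout after a pop: exposing the symbol newly on top of a stack naïvely requires the updated stack encoding first and then a threshold, a genuine two-layer dependency, and a finite window of top symbols only postpones the issue under a run of consecutive pops. The extra look-ahead digit, refreshed in parallel from the current stack encoding at each step (a fixed saturated-sigmoid function of it that does not depend on that step's update), is what lets the window shift by one without stalling; checking that this removes every remaining sequential dependency --- and that forming the configuration one-hot and selecting the stack operation do not quietly reintroduce one --- is the heart of the argument.
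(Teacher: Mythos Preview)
Your overall strategy matches the paper's: both read the corollary off of \cref{thm:pda-epsrnn} by observing that an $\rdpda$ is in particular $\alphabet$-deterministic and has no $\eps$-transitions, so the construction of \cref{thm:pda-epsrnn} applied to it should yield an $\epsrnnlm$ that never needs to emit $\eps$ and is therefore an $\rnnlm$. The paper's entire proof is two sentences to that effect; it does not engage with the sub-step issue at all.

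Where you diverge is in taking seriously the internal sub-steps of the \citet{Chung2021} controller and arguing that they must be collapsed into a single application of \cref{eq:elman-update-rule} via width expansion and look-ahead bookkeeping. This is genuinely more careful than the paper: the informal description of the construction (and the footnotes in \cref{sec:unweighted-case,sec:upper-probabilistic}) does speak of multiple update sub-steps per $\twopda$ transition, and if those sub-steps manifest as $\eps$-emissions then removing the $\rdpda$'s $\eps$-transitions alone would not suffice. The paper sidesteps this tension by, in its formal statement (\cref{thm:minsky-probs}), treating one hidden-state update as corresponding to one $\twopda$ transition --- effectively assuming the collapse you work to establish. Your look-ahead argument (maintaining a finite window of top symbols together with one extra digit, so that the post-pop readout becomes a single affine-plus-threshold of quantities already available) is a sound way to make that assumption explicit, and the observation that reading the $k$-th digit from the stack encoding is linear once the first $k{-}1$ digits are kept as explicit neurons resolves the ``delicate point'' you flag. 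So: your route is the same as the paper's at the level of the corollary, but you supply a layer of rigor about the per-step mechanics that the paper simply asserts.
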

\begin{proof}
    This follows directly from \cref{thm:pda-epsrnn} since an $\rdpda$ is just a special case of the $\twopda$ without $\eps$-transitions which is exactly the restriction imposed on the $\rnnlm$.
\end{proof}

\subsection{Real-time Deterministic $\qptm$}
As before, we want to connect the $\rnnlm$ with the better-understood $\ptm$. 
To do so, we introduce a new class of rationally weighted $\ptm$s that are deterministic and operate in real-time.
\begin{definition}
    A $\qptm$ is \defn{deterministic} if, for any configuration $\stateq, \tapesym \in\states\times\tapealphabet$, and any symbol $\sym\in\epsalphabet$, there is at most one transition starting at that configuration and emitting $\sym$ with non-zero probability. Furthermore, if there is a transition starting in $(\stateq, \tapesym)$ outputting $\eps$ with non-zero probability, it must be the only possible transition in that configuration.
    If there are no $\eps$-transitions with non-zero probability at all, then it is called \defn{real-time} ($\qrdptm$).\looseness=-1
\end{definition}

\begin{corollary}
    \label{prop:qrdptm-rdpda}
    $\qrdptm$s are strongly equivalent to $\rdpda$.
\end{corollary}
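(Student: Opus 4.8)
The plan is to re-use the strong-equivalence construction behind \cref{prop:qptm-pda}, restricted so that it never leaves the real-time deterministic fragment on either side. The starting observation is that the two restrictions line up: a transition of a $\qrdptm$ is picked out uniquely by the triple (current state, symbol under the head, emitted symbol $\sym$), a transition of an $\rdpda$ (\cref{def:deterministic-twopda}) is picked out uniquely by (current state, top of the first stack, emitted symbol $\sym$), and in both models every step must emit a genuine symbol of $\alphabet$ — there are no $\eps$-transitions. Since the two-stack encoding of a Turing tape makes the top of the first stack play exactly the role of the symbol under the head, the natural correspondence between configurations should carry transitions to transitions; the whole game is to make it do so \emph{one step at a time}.

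For the direction $\qrdptm\Rightarrow\rdpda$, given a $\qrdptm$ $\tm$ I would build a $\twopda$ $\pda$ that executes exactly one $\twopda$ transition per step of $\tm$. The first stack holds the tape strictly to the right of the head, the second holds the tape strictly to the left, and — this is the one place where the real-time restriction forces a change to the textbook two-stack construction — the symbol currently under the head is carried in the finite control rather than on a stack. A no-op step of $\tm$ then updates only the control; a right move writing $\tapesym'$ pushes $\tapesym'$ onto the second stack and pops the new head symbol off the first; a left move writing $\tapesym'$ pushes $\tapesym'$ onto the first stack and pops the new head symbol off the second — each realisable by a single $\twopda$ transition because it pushes at most one symbol onto each stack. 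Because $\tm$ is deterministic and real-time, $\pda$ has at most one non-zero-weight transition per (state, first-stack top, emitted symbol) and never emits $\eps$, so $\pda$ is an $\rdpda$. Giving each $\twopda$ transition the weight of the $\tm$ transition it implements makes the induced map one-to-many on transitions (one $\tm$ transition spawns one copy per possible first-stack top, all of equal weight) but a genuine weight- and yield-preserving \emph{bijection on halting paths}, hence $\tm$ and $\pda$ are strongly equivalent.

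For the converse $\rdpda\Rightarrow\qrdptm$, I would run the same correspondence backwards, gluing the two stacks into one work tape. The key simplification is that an $\rdpda$ never \emph{reads} its second stack: for a fixed transition the symbol it pops from the second stack is fixed, and the transition itself is selected by (state, first-stack top, $\sym$) alone. Hence the simulating $\qptm$ can keep in its finite control the $\rdpda$ state together with the tops of both stacks, which is all it needs both to choose the next move and to carry out its effect, updating the physical tape by at most one cell and moving the head by at most one cell in a single step, with no $\eps$-step. Determinism and real-time operation of the $\qptm$ are then immediate, and as above the step-for-step correspondence lifts to a weight- and yield-preserving bijection on halting paths.

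The main obstacle is precisely this ``one step $\leftrightarrow$ one step'' bookkeeping. The general construction of \cref{prop:qptm-pda} is free to spend several $\twopda$ moves — and auxiliary $\eps$-transitions — per Turing-machine step (e.g.\ to realise a write-and-move by two pushes onto the same stack, or to shuttle a symbol from one stack to the other), and any such move violates the $\rdpda$/$\qrdptm$ definitions. The content of the proof is to verify that caching a constant amount of boundary information (the symbol under the head, and the symbol just across the stack/tape seam) in the finite control collapses every composite step into a single transition, and then to check that the resulting transition-level correspondence really is a bijection on halting paths that preserves weights and yields, from which strong equivalence of the two classes follows.
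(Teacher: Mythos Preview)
Your approach differs substantially from the paper's, which dispatches the corollary in a single sentence: ``This directly follows from \cref{prop:qptm-pda} because $\qrdptm$s are a special case of general $\qptm$s.'' The paper does not revisit the construction to check that real-time determinism is preserved on either side. You are right to be uneasy about that: the construction in the proof of \cref{prop:qptm-pda} \emph{does} insert $\eps$-transitions --- case (iii) (left moves) in the $\qptm\to\twopda$ direction splits one $\qptm$ step into two $\twopda$ steps, the second emitting $\eps$, and cases (iib), (iiib), (iv)--(vi) in the reverse direction likewise use auxiliary $\eps$-steps. So a literal appeal to \cref{prop:qptm-pda} does not hand you an $\rdpda$ from a $\qrdptm$ (or vice versa), and your idea of caching the head symbol, respectively the stack tops, in the finite control so that each step of one machine is simulated by exactly one step of the other is the repair that the paper's one-line argument tacitly relies on but does not spell out.

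One wrinkle in your forward construction is worth flagging. Under the paper's $\twopda$ convention (\cref{def:two-stack-wpda}: the transition is selected by the top of the \emph{first} stack only) together with its determinism condition (\cref{def:deterministic-twopda}: at most one non-zero-weight transition per $(\stateq,\stacksym,\sym)$), your left-move case pops the new head symbol off the second stack and records it in the control --- which means a different target state for each possible second-stack top, i.e.\ several transitions sharing the same $(\stateq,\stacksym,\sym)$. You should either argue that only the transition whose $\stacksym_2$ matches the actual second-stack top is applicable (so the determinism clause is to be read as ``at most one \emph{applicable} transition''), or adjust the encoding --- for instance, also cache the current second-stack top in the control and refresh it as part of the preceding step --- so that no branching on the second stack is needed at the moment of a left move.
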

\begin{proof}
This directly follows from \cref{prop:qptm-pda} because $\qrdptm$s are a special case of general $\qptm$s.\looseness=-1
\end{proof}

The resulting Turing machine that our $\rnnlm$ can simulate is now strictly less expressive than the original $\ptm$. 
See \cref{appendix:ptm-comparison} for the proof. 
Hence, our lower bound is strictly less powerful than the upper bound. \looseness=-1

\section{Open Questions}
This work establishes upper and lower bounds on the expressive power of $\rnnlm$s.
While this shows how powerful $\rnnlm$s can be, the bounds do not completely and precisely characterize the models of interest.
A natural question for follow-up work is, therefore, the following.
\begin{question}
    What is the exact computational power of a rationally weighted $\rnnlm$?
\end{question}
While we do not answer this question definitively, we hope that the steps and framework outlined here help follow-up work to establish more precise descriptions of LMs in general, be it in the form of RNNs or other architectures.
Furthermore, in this work, we have introduced novel models of probabilistic computation ($\rdpda, \qrdptm$) that prove useful for describing $\rnnlm$s in a formal setting due to the close connection between their dynamics and those of RNNs.
We also provide a preliminary analysis of the concrete computational power of the novel models.
For example, in \cref{appendix:ptm-comparison}, we provide an example of a language that can be generated by a $\qptm$ but not by its real-time deterministic counterpart, thereby showing that the former is more powerful than the latter.
However, we leave a more precise characterization of their expressive power to future work, specifically.\looseness=-1
\begin{question}
    What is the relationship between deterministic $\qptm$s and non-deterministic devices lower on the hierarchy of computational models, e.g., non-deterministic probabilistic finite-state automata which cannot be represented by deterministic finite-state automata \citep{mohri-1997-finite, Buchsbaum1998}? 
\end{question}
\begin{question}
     Are the $\epsrnnlm$ introduced in \cref{def:epsrnn} weakly equivalent to non-deterministic $\qptm$s without the need to introduce two different types of $\eps$ symbol to store the direction of the head in the outputs?
\end{question}
\looseness=-1

\section{Discussion and Conclusion}
The widespread deployment of LMs in more and more far-reaching applications motivates a precise theoretical understanding of their abilities and shortcomings.
In this paper, we show that tools from formal language theory, namely, probabilistic Turing machines and their extensions, offer a fruitful means of investigating those abilities by allowing us to directly characterize the classes of (probabilistic) languages LMs can represent.

Concretely, we place two different formalizations of $\rnnlm$s into the framework of probabilistic Turing machines, thus characterizing their computational power.
To connect our results with the bigger picture of understanding $\rnnlm$s, consider again \cref{fig:roadmap}.
The upper part of \cref{fig:roadmap} (left to right) expresses the equivalence of $\ptm$s, their rationally weighted equivalent and probabilistic two-stack PDAs.
These provide an upper bound for $\alphabet$-deterministic $\twopda$, 
which are $\twopda$ that are deterministic in their output alphabet.
These in turn can be simulated by $\rnnlm$s that can output $\eps$, allowing the model to perform an unbounded number of computations in between outputting output tokens.
In \cref{appendix:icard3}, we show that any $\epsrnnlm$ is weakly equivalent to some $\twopda$, meaning the expressivity of $\epsrnnlm$ is upper-bounded by that of $\twopda$.
The lower half of the \cref{fig:roadmap} shows the results on the more realistic real-time $\rnnlm$s with rational weight.
We show that such models match the expressive power of real-time probabilistic Turing machines with rational weights (lower left) through their correspondence to real-time deterministic probabilistic 2-stack PDAs (lower right).
These results provide a set of first insights into the modeling power of modern language models and hopefully provide a starting point for the investigation of other modern architectures, such as transformers \citep{Vaswani2017}. \looseness=-1

\section*{Limitations}
Here, we list several points of our analysis that we consider limiting. 
Similarly to \citet{siegelmann-sontag-1992}, all our results assume the $\rnnlm$s to have rationally valued weights and hidden states, which is not the case for $\rnnlm$s implemented in practice. 
It remains to be shown if the bounded precision in practical implementations proves to be too restrictive for the LMs to learn to solve algorithmic problems.
The upper bound result additionally assumes that computation time is unbounded, which is a departure from how RNNs function in practice.
It is not clear how an RNN could be trained in a non-real-time manner, or if that would actually lead to better results on any of the standard NLP tasks.

Importantly, note that the lower bound result is likely not tight, as, we only show the ability of $\rnnlm$s to simulate a specific computational model (namely, $\qrdptm$s).
There might be more expressive models that can also be simulated by $\rnnlm$s.
In general, the results presented are theoretical in nature and not necessarily a practically efficient way of simulating Turing machines. 
Moreover, we do not suggest that trained RNNs in practice actually implement such mechanisms, but only that they are \emph{theoretically capable} of doing it; The construction thus serves the specific purpose of theoretically simulating $\ptm$s and does not naturally extend to training and inference outside of problems specifically designed for Turing machines.\looseness=-1

\section*{Ethics Statement}
Our work sheds light on the theoretical capabilities of language models.
To the best of the authors' knowledge, it does not pose any ethical issues.\looseness=-1

\section*{Acknowledgements}
We would like to thank the anonymous reviewers
for their helpful comments. We would also like to
thank Abra Ganz and Cl\'{e}ment Jambon for their thoughtful feedback and suggestions.\looseness=-1

\bibliography{anthology,custom}
\bibliographystyle{acl_natbib}

\onecolumn
\appendix

\section{Halting Probability and $\rnnlm$}
\label{appendix:rnn-halt-prob}
As discussed in \cref{rem:halting-prob}, the halting probability of a $\ptm$ is defined as the sum of the probabilities of all halting paths, i.e., paths that end with $\qfinal$. 
Note that $\eos$ in a $\rnnlm$ and the final state $\qfinal$ in a $\ptm$ are similar constructs, and so we can consider a similar notion for $\rnnlm$. 

We first see how the corresponding notion of halting probability arises in the context of $\rnnlm$.
While it is possible to define a probability measure over $\kleene{\alphabet}$ with the autoregressive parameterization as in \cref{eq:autoregressive-lm}, not all semimeasures defined by \cref{eq:autoregressive-lm} are probability measures over $\kleene{\alphabet}$.
For example, in the definition of $\rnnlm$ (\cref{def:rnnlm}), if we pathologically choose a projection function $\proj$ such that it always places zero probability on $\eos$, we would end up with a semimeasure that places 0 probability on $\kleene{\alphabet}$.\footnote{For other examples where the autoregressive factorization results in $\kleene{\alphabet}$ receiving $<1$ probability mass, see \citet{du-etal-2023-measure}.}
In fact, \cref{eq:autoregressive-lm} defines a probability measure over the set of finite \textit{and infinite} strings: $\kleene{\alphabet} \cup \alphabet^\infty$. 
Under this formulation, the halting probability of an $\rnnlm$ is the probability mass placed on the set of finite strings, $\kleene{\alphabet}$.%
\footnote{In the case of  $\alphabet^\infty$ having probability 0, we say that \cref{eq:autoregressive-lm} defines a \defn{tight} language model. In a tight language model, the halting probability is $1$ and we can treat \cref{eq:autoregressive-lm} as a probability measure over $\kleene{\alphabet}$.}
We recognize that a similar situation exists in the case of a $\ptm$, where the non-halting trajectories are infinite sequences that can be considered as elements of $\{0,1\}^\infty$.\footnote{We can view each random branching as corresponding to either $0$ or $1$, thus resulting in an infinite binary string.}

In trying to measure the probability mass placed on the set $\kleene{\alphabet}$ within $\kleene{\alphabet} \cup \alphabet^\infty$, we first need to define an appropriate probability measure over $\kleene{\alphabet} \cup \alphabet^\infty$. 
However, defining probability measures over uncountable sets such as $\alphabet^\infty$ or $\{0,1\}^\infty$ raises nontrivial difficulties. As a simple illustration, consider an infinite fair coin toss. The sample space for this semimeasure is $\{\texttt{H},\texttt{T}\}^\infty$. Clearly, each single infinite event (a binary string $\boldsymbol{\omega}$) has probability $(\frac{1}{2})^\infty=0$. However, treating uncountable semimeasures carelessly would result in the following paradox:
\begin{align}
    1=p(\{\texttt{H},\texttt{T}\}^\infty)
    =p\left(\bigcup_{\boldsymbol{\omega}\in\{\texttt{H},\texttt{T}\}^\infty}\{\boldsymbol{\omega}\}\right)
    =\sum_{\boldsymbol{\omega}\in\{\texttt{H},\texttt{T}\}^\infty}
     p(\{\boldsymbol{\omega}\})
    =\sum_{\boldsymbol{\omega}\in\{\texttt{H},\texttt{T}\}^\infty} 0
    \stackrel{?}{=} 0
\end{align}
For reasons like this, a rigorous discussion of $\ptm$ \cite{danroy-thesis} or $\rnnlm$ \cite{du-etal-2023-measure} will involve a modicum of measure theory and typically starts with defining the appropriate $\sigma$-algebra. In this work, we find that introducing such technical machinery obscures our purposes and we therefore intentionally omitted them. For a rigorous discussion on the corresponding definition of halting probability in $\rnnlm$s and more general autoregressive models, see \citet{du-etal-2023-measure}.

\section{Versions of Probabilistic Two-stack Pushdown Automata}\label{appendix:2pda}

In our work, we use an adaptation of the traditional two-stack PDA whose transition function depends only on the top symbol of one of the stacks (and the current state), whereas usually the top symbols of both stacks are taken into account. 
This setup follows the proof by \citep{hopcroft01} but warrants additional justification when applied to the probabilistic case.
\begin{proposition}
    A $\twopda$ whose transition weighting function depends on the top symbol of both stacks can be simulated by a $\twopda$ whose transition function depends only on the top symbol of the first stack.\looseness=-1
\end{proposition}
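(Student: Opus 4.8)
The plan is to follow the classical automata-theoretic reduction — keep the top symbol of the second stack in the finite control of the simulating machine — and then check that it survives the passage to rational weights. Given a $\twopda$ $\pda$ whose weighting function $\trans$ reads the tops of \emph{both} stacks, I would build a $\twopda$ $\pda'$ whose state set is $\states \times \stackalphabet$: a state $(\stateq,\stacksym')$ records that $\pda$ is in state $\stateq$ with $\stacksym'$ on top of its second stack. The transition function of $\pda'$ then sees $(\stateq,\stacksym')$ together with the top $\stacksym$ of the first stack, i.e.\ exactly the information $\trans$ uses, so every transition of $\pda$ out of the configuration $(\stateq,\stacksym,\stacksym')$ is mirrored by a transition of $\pda'$ out of $((\stateq,\stacksym'),\stacksym)$ that performs the same stack operations, emits the same symbol, carries the same rational weight, and moves to a state $(\stateq'',\stacksym_{\mathrm{new}})$, where $\stacksym_{\mathrm{new}}$ is the resulting top of the second stack.

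First I would dispose of the easy cases. If the mirrored transition pushes a non-empty symbol onto the second stack, then $\stacksym_{\mathrm{new}}$ is that symbol; if it leaves the second stack untouched, then $\stacksym_{\mathrm{new}} = \stacksym'$. In both cases the target state is determined, each transition of $\pda$ corresponds to exactly one transition of $\pda'$ with the same weight, and local normalization of $\pda'$ over its configurations $((\stateq,\stacksym'),\stacksym)$ is inherited verbatim from the local normalization of $\pda$ over $(\stateq,\stacksym,\stacksym')$. On these parts the induced map on halting paths is a trivial weight- and yield-preserving bijection, so strong equivalence is immediate.

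The hard case — and this is precisely the ``additional justification'' needed beyond the unweighted argument of \citet{hopcroft01} — is a transition that pops from the second stack without pushing anything back: the newly exposed symbol is not named by the transition, so $\pda'$ cannot refresh its register in a single step. The plan here is to route such a move through an auxiliary probing gadget: $\pda'$ performs the mirrored transition into a temporary state and then, via a family of $\eps$-transitions indexed by $c \in \stackalphabet$ — each popping $c$ from the second stack, immediately pushing $c$ back, emitting $\eps$, and entering the state that records $c$ — discovers and stores the new top. In any \emph{reachable} configuration exactly one probe is applicable, so the gadget is deterministic. The obstacle I expect to be the real work is making this compatible both with the requirement that $\pda'$ be a bona fide probabilistic $\twopda$ (its weights must sum to one over each configuration) and with weight preservation of the path correspondence: I would assign the probes weights that normalize over the temporary state and argue that the inapplicable probes — those popping a $c$ not currently on top — contribute to no halting path, so the conditional probability of continuing along any path is unchanged. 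The resulting correspondence between halting paths of $\pda$ and of $\pda'$ is then still yield-preserving (the probes emit $\eps$) and weight-preserving (the probe actually taken is forced), which upgrades the simulation to strong equivalence and establishes the claimed ``without loss of generality''.
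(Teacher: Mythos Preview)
Your approach is different from the paper's. Rather than caching the second stack's top in the finite control and then distinguishing push/no-op transitions (easy) from bare-pop transitions (hard), the paper handles every transition of $\pda_1$ uniformly by a two-step detour through a fresh transition-specific state $\stateq''$: an $\eps$-move from $(\stateq,\stacksym)$ that performs the stack-1 pop, pops $\stacksym_2$ from stack~2, and pushes $\stacksym'$ onto the \emph{first} stack; followed by a move from $(\stateq'',\stacksym')$ that---now reading $\stacksym'$ on stack~1---pops it, pushes $\stacksym_3,\stacksym_4$, emits $\sym$, and lands in $\stateq'$. No invariant on the state set is maintained, and the paper's argument stops at exhibiting this transition-level correspondence; it does not analyse how the weights distribute or normalize across the new configurations of $\pda_2$.

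Your probing-gadget step, however, contains a real gap. If the probes out of the temporary state carry weights summing to one while only the probe matching the actual top $c^{*}$ is applicable, then every halting path through that state acquires an extra factor $w_{c^{*}}<1$: the inapplicable probes do not \emph{redirect} their mass to the applicable one, they leak it. ``Contributing to no halting path'' is not the same as ``contributing nothing to the local normalizer'', so the inference that ``the conditional probability of continuing along any path is unchanged'' does not follow, and the claimed weight preservation fails. Conversely, giving every probe weight~$1$ restores weight preservation but violates the local-normalization constraint you correctly identified as the crux. The paper's construction sidesteps this tension simply by not engaging with it; if you want a watertight probabilistic simulation you will need a different device than uniform blind probes---one that makes the newly exposed stack-2 symbol visible to $\pda'$ without spending probability to discover it.
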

\begin{proof}
    ($\implies$) Let $\pda_1$ be a $\twopda$ whose transition function has the form $\trans: \states \times \stackalphabet^2 \times \epsalphabet \times \states \times \stackalphabet^4 \to \Q_{\geq0}$, such that:\looseness=-1
    \begin{equation}
        \sum_{\substack{\sym\in\epsalphabet, \stateq'\in\states,\\ \stacksym_1, \stacksym_2,\stacksym_3,\stacksym_4\in\epsstackalphabet}}\trans\left(\twoPdaEdge{\stateq}{\sym,\stacksym, \stacksym'}{\stateq'}{\stacksym_1}{\stacksym_3}{\stacksym_2}{\stacksym_4}\right) = 1
    \end{equation}
    We now show that we can construct a $\twopda\ \pda_2$ as defined in \cref{def:two-stack-wpda}, such that for any transition in $\pda_1$, $\pda_2$ has a finite sequence of transitions resulting in the same state and stack configurations.
    Let $\twoPdaEdge{\stateq}{\sym,\stacksym, \stacksym'}{\stateq'}{\stacksym_1}{\stacksym_3}{\stacksym_2}{\stacksym_4}$ be such a transition in $\pda_1$, where $\stacksym$ is the top symbol on the first stack and $\stacksym'$ is the top symbol on the second stack. 
    We can simulate this transition in $\pda_2$ through the following chain of transitions, where we introduce a transition-specific new state $\stateq''$:
    \begin{equation}
        \twoPdaEdge{\stateq}{\eps,\stacksym}{\stateq''}{\stacksym_1}{\stacksym'}{\stacksym_2}{\eps}, \twoPdaEdge{\stateq''}{\sym,\stacksym'}{\stateq'}{\stacksym'}{\stacksym_3}{\eps}{\stacksym_4}
    \end{equation}
    ($\impliedby$) The converse direction of proving that any such $\twopda\ \pda_1$ whose transitions depend on the top symbols of both stacks can simulate a specific $\twopda\ \pda_2$ whose transitions depend only on the top symbol of the first stack is trivial: For any transition in $\pda_2$, we can just create a transition with the same semantics and weight 1 in $\pda_1$ for each second stack top symbol $\stacksym'$.
\end{proof}

\section{Rationally Weighted Probabilistic Turing Machines}\label{appendix:qptm}

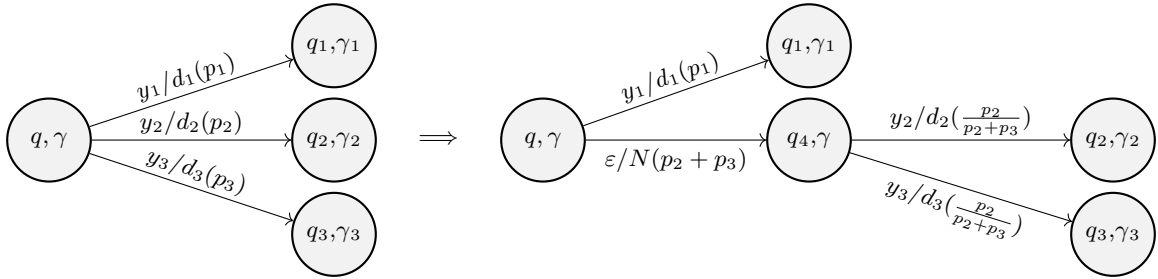
\begin{figure*}[ht!]
    \centering
    \footnotesize
    \begin{tikzpicture}
    \node[state, minimum size=11mm](q1) at (-3,0) {\small$\stateq,\tapesym$};
    \node[state, minimum size=11mm](q2) at (0.75,1.25) {\small$\stateq_1{,}\tapesym_1$};
    \node[state, minimum size=11mm](q3) at (0.75,0) {\small$\stateq_2{,}\tapesym_2$};
    \node[state, minimum size=11mm](q4) at (0.75,-1.25) {\small$\stateq_3{,}\tapesym_3$};
    \draw[->](q1) -- node[sloped, above=-0.05]{$\sym_1/d_1 (p_1)$}  (q2);
    \draw[->](q1) -- node[above=-0.05]{$\sym_2/d_2 (p_2)$} (q3);
    \draw[->](q1) -- node[sloped, above=-0.05]{$\sym_3/d_3 (p_3)$} (q4);
    \node[state, minimum size=11mm](q5) at (3.5,0) {\small$\stateq,\tapesym$};
    \node[state, minimum size=11mm](q6) at (7,1.25) {\small$\stateq_1{,}\tapesym_1$};
    \node[state, minimum size=11mm](q7) at (7,0) {\small$\stateq_4{,}\tapesym$};
    \node[state, minimum size=11mm](q8) at (11,0) {\small$\stateq_2{,}\tapesym_2$};
    \node[state, minimum size=11mm](q9) at (11,-1.25) {\small$\stateq_3{,}\tapesym_3$};
    \draw[draw=none] (q3) -- node[]{$\implies$} (q5);
    \draw[->](q5) -- node[sloped, above=-0.05]{$\sym_1/d_1 (p_1)$} (q6);
    \draw[->](q5) -- node[sloped, below]{$\eps/\tmnoop (p_2+p_3)$} (q7);
    \draw[->](q7) -- node[sloped, above=-0.05]{$\sym_2/d_2 (\frac{p_2}{p_2+p_3})$} (q8);
    \draw[->](q7) -- node[sloped, below]{$\sym_3/d_3 (\frac{p_2}{p_2+p_3})$} (q9);
    \end{tikzpicture}
    \caption{Binarization of an example computation graph as described in \cref{prop:ptm-qptm}.}
    \label{fig:binarize}
\end{figure*}

\ptmqptm*
\begin{proof}[Proof]
    ($\implies$) 
    The forward direction is trivial: Every $\ptm$ is a $\qptm$ because $\frac{1}{2}$ is a rational number.\looseness=-1

    ($\impliedby$) 
    We start by noting that we can transform any $\qptm$ $\tm$ into one that has exactly two possible (rational-valued) transitions at any current state and tape symbol. 
    We do this by repeatedly applying the following transformations:
    For any $(\stateq,\tapesym)\in\states\times\tapealphabet$ that has only one possible transition, its probability is 1, so we can split it into two new identical transitions with probability $\frac{1}{2}$.
    For $(\stateq,\tapesym)\in\states\times\tapealphabet$, this allows exactly 2 possible transitions, this is already as required by the $\ptm$ (save for the probabilities, which we deal with in the next step).
    For any $(\stateq,\tapesym)\in\states\times\tapealphabet$ that allow $k>2$ possible transitions, we repeatedly apply the following steps: 
    \begin{enumerate}[itemsep=0.5pt,parsep=0pt]
        \item We choose one of the transitions whose probability we denote by $p$, and leave it as it is;
        \item We then create a new $\eps$-transition with $\tmdir=\tmnoop$ to a new state with probability $1-p$, leaving $\tapesym$ the same;\looseness=-1
        \item We then change the remaining $k-1$ transitions to start at the new state and tape symbol.
    \end{enumerate}
    These transformations yield a $\qptm$ with a completely binarized transition function.
    An example of this is shown in \cref{fig:binarize}.
    Now note that any locally normalized pair of transitions with rational weights can be replaced by a sequence of transitions whose probabilities are $\frac{1}{2}$ each \citep{Knuth1976TheCO, icard2020calibratinggm}.\footnote{This step prevents this proof strategy from demonstrating strong equivalence in the backward direction because introduces path ambiguity. 
    For this reason, we suspect that $\ptm$s and $\qptm$s are in fact \emph{not} strongly equivalent.}
    This, in conjunction with the previous transformation, allows us to convert our $\qptm$ into a $\ptm$ without changing the string probability.
\end{proof}

\section{Proof of \cref{prop:qptm-pda}}\label{appendix:qptm-twopda}
\qptmpda*
\begin{proof}[Proof]
($\implies$)
We want to show that for every $\qptm$, we can construct a strongly equivalent $\twopda$.
We do this in two steps: 
\begin{enumerate*}[label=(\arabic*)]
    \item we construct a candidate $\twopda$ and
    \item we outline a weight- and yield-preserving bijection between accepting paths, whose existence proves strong equivalence.
\end{enumerate*} 

\paragraph{Construction of $\pda$.}
Let $\tm = \qptmtuple$ be a $\qptm$.
The constructed $\twopda$ $\pda = \left(\states_\pda, \alphabet, \stackalphabet, \trans_\pda, \qinit, \qfinal\right)$ will inherit $\tm$'s alphabets. It will also inherit all of $\tm$'s states and add a number of additional ones, i.e., $\states_\pda = \states \cup \states^\prime$. 
As we showcase later, the subset $\states \subseteq \states_\pda$ will be crucial in the analysis of the relationship between the two models.
The states of $\pda$ will be a \emph{super}set of those in $\tm$.
For each transition in $\tm$, we define a transition in $\pda$ as follows, depending on the direction the head moves after writing to the tape:
\begin{enumerate}[label=(\roman*)]
    \item \textbf{Transitions that leave the head in place}, that is, transitions of the form $(\stateq,\tapesym) \xrightarrow{\sym/\tmnoop} (\stateq', \tapesym')$. For each such transition in $\tm$, we add an equally weighted new transition $\twoPdaEdge{\stateq}{\sym,\stacksym}{\stateq}{\stacksym}{\stacksym'}{\eps}{\eps}$ to $\pda$.
    \item \textbf{Transitions that move the head to the right}, i.e., $(\stateq,\tapesym) \xrightarrow{\sym/\tmright} (\stateq', \tapesym')$. For each such transition in $\tm$, we add an equally weighted new transition $\twoPdaEdge{\stateq}{\sym,\stacksym}{\stateq'}{\stacksym}{\eps}{\eps}{\stacksym'}$ to $\pda$.
    \item \textbf{Transitions that move the head to the left}, that is, transitions of the form $\tau = (\stateq,\tapesym) \xrightarrow{\sym/\tmleft} (\stateq', \tapesym')$. For any such transition $\tau$ in $\tm$, we add an equally weighted transition $\twoPdaEdge{\stateq}{\sym,\stacksym}{\stateq_\tau}{\stacksym}{\stacksym'}{\eps}{\eps}$, followed by a transition $\twoPdaEdge{\stateq_\tau}{\eps,\stacksym'}{\stateq}{\eps}{\stacksym_2}{\stacksym_2}{\eps}$ with weight 1 to $\pda$. Here, $\stateq_\tau$ is a new state unique to transition $\tau$, and $\tapesym_2$ is the symbol at the top of the second stack before the transition.
\end{enumerate}
\paragraph{A weight- and yield-preserving bijection.}
Given the construction outlined above, we now show the existence of a weight- and yield-preserving bijection between the paths of the two models.
While we are only interested in halting (accepting) paths, i.e., paths going from $\qinit$ to $\qfinal$ with a non-zero weight,\footnote{Note that we treat any transitions that have 0 weight as non-existing, and vice versa.
A careful treatment could distinguish weight 0 and non-existence in the model.
}
we prove a stronger result that there exists a weight- and yield-preserving bijection between all $\states$-subpaths, a notion we will define below.

\begin{definition}
A \defn{$\states$-subpath} in $\pda$ is a subpath whose last state corresponds to a state that also exists in the original $\tm$.
Recall that $\tm$'s states are constructed to be a \emph{sub}set of those in $\pda$.\looseness=-1
\end{definition}

($\Rightarrow$)
We will define a weight- and yield-preserving mapping $\psi_1$ from the subpaths $\apath_{\tm}$ in the original $\tm$ to the $\states$-subpaths $\apath_{\pda}$ in the $\pda$.
Fix an arbitrary subpath $\apath_{\tm}$ in $\tm$. 
Our proof proceeds by structural induction on the subpath
relation: $\apathtwo_{\tm} \prec \apath_{\tm} \iff \apathtwo_{\tm}$ is a strict subpath of $\apath_{\tm}$. 
Note that this is a well-founded ordering, making it suitable for induction.

\paragraph{Inductive Hypothesis.}
For all $\apathtwo_{\tm} \prec \apath_\tm$, the function $\psi_1$ is weight- and yield-preserving.

\paragraph{Base Case.}
The function $\psi_1$ is defined to map the empty subpath in $\tm$ to the empty subpath in $\pda$ with weight 1 and yield $\varepsilon$.
This preserves the weight and yield.

\paragraph{Inductive Case.}
\begin{figure}[h!]
    \centering
    \begin{align*}
    \underbrace{\underbrace{{\color{brown}(q_\iota, \sqcup) \xrightarrow{y_1/N} (q_1, \gamma_1)} \to \dots \to
    {\color{violet}(q_2, \gamma_2) \xrightarrow{y_2/R} (q_3, \gamma_3)}}_{\apathtwo_{\tm}} \to 
    \underbrace{{\color{teal}(q_4, \gamma_4) \xrightarrow{y_3/L}(q_5, \gamma_5)}}_{\tau_\tm}}_{\apath_{\tm}} \\\\
    \underbrace{\underbrace{{\color{brown}q_\iota\xrightarrow[\phantom{y_1, \bot}\varepsilon\to\varepsilon]{y_1, \bot, \bot\to\gamma_2}q_1} \to \dots \to
    {\color{violet}q'_2\xrightarrow[\phantom{y_2, \gamma_2}\varepsilon\to\gamma_3]{y_2, \gamma_2, \gamma_2 \to\varepsilon}  q_3}}_{\psi_1(\apathtwo_{\tm} )} \to 
    \underbrace{{\color{teal}q_4 \xrightarrow[\phantom{y_4, \gamma_4}\varepsilon \to \varepsilon]{y_4, \gamma_4, \gamma_4 \to \gamma_5}q'_\tau \xrightarrow[\phantom{\varepsilon, \gamma_4}\gamma' \to \varepsilon]{\varepsilon, \gamma_4, \varepsilon \to \gamma'}
    q_5}}_{\tau_\pda \tau_\pda'}}_{\psi_1(\apath_{\tm})}
\end{align*}
    \caption{Illustration of how $\psi_1$ maps a path in the original $\qptm$ (top half) to a path in the $\twopda$ (bottom half).
    Note that those states with a prime do \emph{not} correspond to states in the original $\qptm$.\looseness=-1}
    \label{fig:ptm-pda-bijection}
\end{figure}

Let $\apath_\tm$ be an arbitrary non-empty subpath in $\tm$. As depicted in \Cref{fig:ptm-pda-bijection}, let $\apathtwo_{\tm}$ be the (strict) subpath of $\apath_{\tm}$ that omits the last transition $\tau_\tm$. 
Then $\apathtwo_\tm$ is mapped as follows: $\apathtwo_{\tm} \mapsto \psi_1(\apathtwo_{\tm} )$.
This mapping exists and is weight- and yield-preserving by the inductive hypothesis.
Now, we seek to extend the function $\psi_1$ to $\apath_{\tm} = \apathtwo_{\tm}  \circ \tau_\tm$.
By our construction of $\pda$, each of the transitions in $\apath_\tm$ is simulated either by a single transition in $\pda$ ({\color{brown}case (i)} and {\color{violet}case (ii)}) or two consecutive transitions in $\pda$ ({\color{teal}case (iii)}).
In the two-transition case, the sequence of two actions is \emph{unique} because the second action has a transition-dependent state name, e.g., $\stateq_{\tau_\tm}$.
Thus, we can extend $\psi_1$ in the following manner:
\begin{subequations}
\begin{align}
    \psi_1(\apath_{\tm}) = \psi_1(\apathtwo_{\tm} \circ \tau_{\tm}) = \psi_1(\apathtwo_{\tm}) \circ \underbrace{\psi_1(\tau_{\tm})}_{\defeq \tau_{\pda}} = \apathtwo_{\pda} \circ \tau_{\pda} \label{eq:psi1-eq-1} \\
    \psi_1(\apath_{\tm}) = \psi_1(\apathtwo_{\tm} \circ \tau_{\tm}) = \psi_1(\apathtwo_{\tm}) \circ \underbrace{\psi_1(\tau_{\tm})}_{\defeq \tau_{\pda} \circ \tau'_{\pda}} = \apathtwo_{\pda} \circ \tau_{\pda} \circ \tau'_{\pda} \label{eq:psi1-eq-2}
\end{align}
\end{subequations}
depending on whether $\tau_{\tm}$ corresponds to one 
 (\cref{eq:psi1-eq-1}) or two transitions (\cref{eq:psi1-eq-2}) in $\pda$.
Note that the extension $\psi_1$ preserves the weight and yield of $\apath_{\tm}$. 
This is clear by inspecting the construction as exactly one transition inherits $\tau_{\tm}$'s weight and yield.
In the two-transition case, the second transition is given weight 1 and the yield $\varepsilon$.

($\Leftarrow$)
Next, we define a weight- and yield-preserving mapping from the subpaths $\apath_{\pda}$ in the constructed $\twopda$ to the subpaths $\apath_{\tm}$ in the $\tm$.
Fix an arbitrary $\states$-subpath $\apath_{\pda}$ in $\pda$. 
Our proof proceeds by structural induction on the subpath relation: $\apathtwo_{\pda} \prec \apath_{\pda} \iff \apathtwo_{\pda}$ is a strict $\states$-subpath of $\apath_{\pda}$. 
Note that this is a well-founded ordering, making it suitable for induction.

\paragraph{Inductive Hypothesis.}
For all $\apathtwo_{\pda} \prec \apath_\pda$, the function $\psi_2$ is weight- and yield-preserving.

\paragraph{Base Case.}
The function $\psi_2$ is defined to map the empty subpath in $\pda$ to the empty subpath in $\tm$ with weight 1 and yield $\varepsilon$.
This preserves the weight and yield.

\paragraph{Inductive Case.}
Let $\apath_{\pda}$ be an arbitrary non-empty $Q$-subpath in $\pda$.
Consider the figure below:\looseness=-1
\begin{figure}[h!]
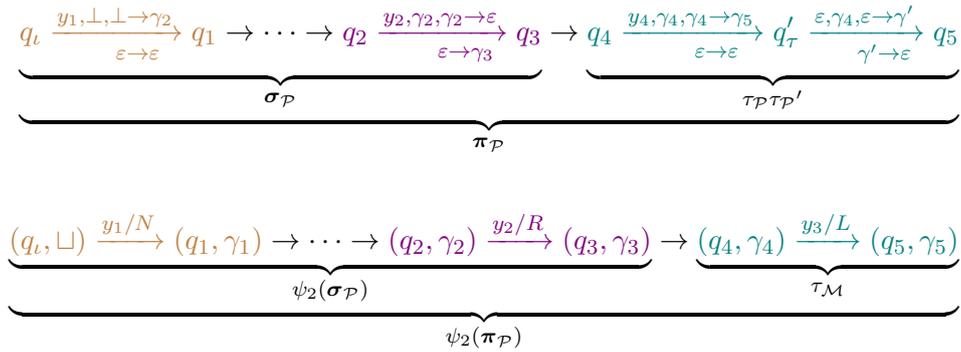

    \centering
    \begin{align*}
    \underbrace{\underbrace{{\color{brown}q_\iota\xrightarrow[\phantom{y_1, \bot}\varepsilon\to\varepsilon]{y_1, \bot, \bot\to\gamma_2}q_1} \to \dots \to
    {\color{violet}q_2\xrightarrow[\phantom{y_2, \gamma_2}\varepsilon\to\gamma_3]{y_2, \gamma_2, \gamma_2 \to\varepsilon}  q_3}}_{\apathtwo_{\pda}} \to 
    \underbrace{{\color{teal}q_4 \xrightarrow[\phantom{y_4, \gamma_4}\varepsilon \to \varepsilon]{y_4, \gamma_4, \gamma_4 \to \gamma_5}q'_\tau \xrightarrow[\phantom{\varepsilon, \gamma_4}\gamma' \to \varepsilon]{\varepsilon, \gamma_4, \varepsilon \to \gamma'}
    q_5}}_{\tau_\pda \tau_\pda'}}_{\apath_{\pda}} \\\\
    \underbrace{\underbrace{{\color{brown}(q_\iota, \sqcup) \xrightarrow{y_1/N} (q_1, \gamma_1)} \to \dots \to
    {\color{violet}(q_2, \gamma_2) \xrightarrow{y_2/R} (q_3, \gamma_3)}}_{\psi_2(\apathtwo_{\pda})} \to 
    \underbrace{{\color{teal}(q_4, \gamma_4) \xrightarrow{y_3/L}(q_5, \gamma_5)}}_{\tau_\tm}}_{\psi_2(\apath_{\pda})}
\end{align*}
    \caption{Illustration of how $\psi_2$ maps a path in the $\twopda$ (top half) to a path in the original $\qptm$ (bottom half).
    Note that those states with a prime do \emph{not} correspond to states in the original $\qptm$.
    \looseness=-1}
    \label{fig:ptm-pda-bijection-2}
\end{figure}

By construction, this path can be decomposed into a strict $\states$-subpath $\apathtwo_{\pda} \prec \apath_\pda$ and either one or two additional transitions as defined in (i)--(iii), as shown in \cref{fig:ptm-pda-bijection-2}.
In the single-transition case, post-pended to $\apathtwo_\pda$ is either a single transition in $\tm$ ({\color{brown}case (i)} and {\color{violet}case (ii)}).
In the two-transition case, post-pended to $\apathtwo_\pda$ is a pair of the two consecutive transitions of $\pda$ that together simulate a single transition in $\tm$ ({\color{teal}case (iii)}), shown at the top of \cref{fig:ptm-pda-bijection-2}.
In the first case, $\psi_2$ maps that action in $\apath_{\pda}$ to its (unique) corresponding action in $\tm$.
In the latter case, note that due to the uniqueness of the added named state $\stateq_\tau'$ for the transition of {\color{teal}type (iii)} in $\tm$, all non-zero-weighted paths containing $\stateq_\tau'$ in $\pda$ will contain \emph{both} transitions in $\pda$ defined in (iii), consecutively.
Such pairs of transitions have a corresponding transition in the original $\qptm$---the transition for which they were added. 
Thus, we can extend $\psi_2$ in the following manner:
\begin{subequations}
\begin{align}
    \psi_2(\apath_{\pda}) &= \psi_2(\apathtwo_{\pda} \circ \tau_{\pda}) = \psi_2(\apathtwo_{\pda}) \circ \underbrace{\psi_2(\tau_{\pda})}_{\defeq \tau_{\tm} } = \apathtwo_{\tm} \circ \tau_{\tm} \label{eq:psi2-eq-1} \\
    \psi_2(\apath_{\pda}) &= \psi_2(\apathtwo_{\pda} \circ \tau_{\pda} \circ \tau'_{\pda}) = \psi_2(\apathtwo_{\pda}) \circ \underbrace{\psi_2(\tau_{\pda} \circ \tau'_{\pda})}_{\defeq \tau_{\tm} } = \apathtwo_{\tm} \circ \tau_{\tm} \label{eq:psi2-eq-2}
\end{align}
\end{subequations}
depending on whether $\tau_{\tm}$ corresponds to one (\Cref{eq:psi2-eq-1}) or two transitions (\Cref{eq:psi2-eq-2}) in $\pda$.
Note that the extension $\psi_2$ preserves the weight and yield. 
This is clear by inspecting the construction as exactly one transition inherits $\tau_{\tm}$'s weight and yield.
In the two-transition case, the second transition is given weight 1 and the yield $\varepsilon$.

\paragraph{Wrapping up.}
Thus, we have defined a pair ($\psi_1$ and $\psi_2$) of weight-preserving, yield-preserving total functions that map arbitrary $\states$-subpaths\footnote{All paths in $\tm$ are $\states$-subpaths by definition.} in $\tm$ to paths in $\pda$ and vice versa.
It is easy to see that the two maps are inverses of each other; $\psi_2$ undoes the operations of $\psi_1$, and vice versa.
This means that $\psi_1$ is a bijection.
Finally, because all halting paths in $\tm$ are $\states$-subpaths, we conclude $\tm$ and $\pda$ are strongly equivalent.

($\impliedby$) 
To prove the backward direction, we want to show that any $\twopda$ $\pda$ has a strongly equivalent $\qptm$ $\tm$.
We proceed analogously: Given a $\twopda$ $\pda$, we construct a candidate $\qptm$ $\tm$ and then sketch a path level weight- and yield-preserving bijection, again in the form of two injective functions which are inverses of one another.
This proves strong equivalence.

\paragraph{Construction of $\tm$.}
Let $\pda = \twostackgppdatuple$ be an arbitrary probabilistic $\twopda$.
Now we define $\qptm$ $\tm= \left(\states_\tm, \alphabet, \stackalphabet, \trans_\tm, \qinit, \qfinal\right)$ to have the same alphabets $\alphabet, \stackalphabet$ as $\pda$.
Furthermore, we let $\tm$ have a superset of the states of $\pda$, that is, we let $\states_\tm=\states \cup\states'$, where $\states'$ are some additional states.
We define the transitions of $\tm$ by enumerating and distinguishing between \emph{all} the possible transition types in $\pda$:
\begin{enumerate}[label=(\roman*)]
    \item \textbf{Transitions that do not pop or push any symbols}, i.e., $\twoPdaEdge{\stateq}{\sym,\stacksym}{\stateq'}{\eps}{\eps}{\eps}{\eps}$.
    For such transitions, we define the equally weighted stay-in-place operation ($\tmnoop$) of the form $(\stateq,\tapesym) \xrightarrow{\sym/\tmnoop} (\stateq', \tapesym)$ in $\tm$.
    \item \textbf{Transitions that pop a symbol and push a symbol to the same stack}, i.e.,
    \begin{enumerate*}
        \item $\twoPdaEdge{\stateq}{\sym,\stacksym}{\stateq'}{\stacksym_1}{\stacksym_3}{\eps}{\eps}$ with $\stacksym_1, \stacksym_3\neq\eps$ or
        \item $\twoPdaEdge{\stateq}{\sym,\stacksym}{\stateq'}{\eps}{\eps}{\stacksym_2}{\stacksym_4}$ with $\stacksym_2,\stacksym_4\neq\eps$.
    \end{enumerate*}
    Each transition of type (a) defines a single $\qptm$ transition $(\stateq,\tapesym_1) \xrightarrow{\sym/\tmnoop} (\stateq', \tapesym_3)$ with the same weight as the original transition. 
    Each transition of type (b) defines two consecutive transitions $(\stateq,\tapesym) \xrightarrow{\sym/\tmleft} (\stateq_\tau, \tapesym), (\stateq_\tau,\tapesym_2) \xrightarrow{\eps/\tmright} (\stateq', \tapesym_4)$, where we create a new state $\stateq_\tau$ unique to the given transition in $\pda$.
    The weight of the first of the two new transitions in $\pda$ equals the weight of the original transition in $\tm$, and the second one has weight 1 (thus being the only possible continuation from state $\stateq_\tau$).
    \item \textbf{Transitions that pop from one stack and push to the other stack}, i.e., \begin{enumerate*}
        \item $\twoPdaEdge{\stateq}{\sym,\stacksym}{\stateq'}{\stacksym_1}{\eps}{\eps}{\stacksym_4}$ with $\stacksym_1,\stacksym_4\neq\eps$ or
        \item $\twoPdaEdge{\stateq}{\sym,\stacksym}{\stateq'}{\eps}{\stacksym_3}{\stacksym_2}{\eps}$ with $\stacksym_2,\stacksym_3\neq\eps$.
    \end{enumerate*}
    A transition of type (a) defines a transition in a $\qptm$ that moves the head to the right (weighted equally to the original transition): $(\stateq,\tapesym_1) \xrightarrow{\sym/\tmright} (\stateq', \tapesym_4)$. 
    A transition of type (b) defines a $\qptm$ transition moving to the left followed by a stay-in-place operation that changes the symbol below the head: $(\stateq,\tapesym) \xrightarrow{\sym/\tmleft} (\stateq_\tau, \tapesym), (\stateq_\tau,\tapesym_2) \xrightarrow{\eps/\tmnoop} (\stateq', \tapesym_3)$. 
    Again, the weight of the first new transitions in $\pda$ is chosen to be equal to the weight of the original transition in $\tm$ and the latter has weight 1.
    \item \textbf{Transitions that push a symbol without popping one.} 
    These can be thought of as insertions that move all the symbols on the right side of the head to the right. 
    Thus, they can be simulated by \emph{sequences} of actions by the $\qptm$.
    For instance, a transition of the form $\twoPdaEdge{\stateq}{\sym,\stacksym}{\stateq'}{\eps}{\eps}{\stacksym_3}{\eps}$ where $\stacksym_3\neq\eps$ can be simulated in $\tm$ as follows:
    \begin{enumerate}[label=\arabic*.)]
        \item Change the current symbol under the head ($\tapesym$) to a marker symbol not in the alphabet, e.g., $\downarrow$;
        \item Go to the end of the string and shift all the characters up to the marker one by one to the right, until back at the original position. 
        \item Replace $\downarrow$ with the symbol to be inserted ($\tapesym_3$).
    \end{enumerate}
    Therefore, the sequence of such transitions is added to $\tm$ for any transition of this form in the $\twopda$.
    We preserve the weight of the original transition by defining the weight of the first transition in step 1.) to be the weight of the original transition in $\pda$ and the weights of all following transitions in steps 2.) and 3.) to be 1.
    \item \textbf{Transitions that pop from one or both stacks without pushing equally many symbols.} These can be thought of as deletions that remove symbols from the $\tm$'s tape and move all the symbols on the right of the head to the left. 
    For instance, a transition $\twoPdaEdge{\stateq}{\sym,\stacksym}{\stateq'}{\stacksym_1}{\eps}{\eps}{\eps}$ where $\stacksym_1\neq\eps$ can be modeled using the strategy from (iv), where step 2.) changes to shifting all the symbols to the left one by one until reaching the end of the string, then moving back to the marker. 
    We define the sequence of such transitions in $\tm$ for any transition of this form in the $\twopda$.
    Again, the first transition has the same weight as the original transition in $\pda$ and all following new transitions have weight 1. 
    \item \textbf{Transitions that pop symbols from both stacks and push to both stacks}, i.e., $\twoPdaEdge{\stateq}{\sym,\stacksym}{\stateq'}{\stacksym}{\stacksym_3}{\stacksym_2}{\stacksym_4}$, where $\stacksym_1,\stacksym_2,\stacksym_3,\stacksym_4\neq\eps$. Such transitions can be regarded as a composition of the two cases of (ii),
    performing first the simulation from (b) and then from (a). 
    The chaining can be done by adding another intermediate state, $\stateq_{\tau'}$. 
    Such transitions in the $\twopda$ therefore define the sequence of transitions $(\stateq,\tapesym) \xrightarrow{\sym/\tmleft} (\stateq_\tau, \tapesym), (\stateq_\tau,\tapesym_2) \xrightarrow{\eps/\tmright} (\stateq_{\tau'}, \tapesym_4), (\stateq_{\tau'},\tapesym) \xrightarrow{\eps/\tmnoop} (\stateq', \tapesym_3)$ in the $\qptm$.
    The weights are again chosen such that the weight of the first one corresponds to that of the original transition in $\pda$, and all following ones have weight 1.\footnote{Note that the same kind of composition can be used to achieve popping or pushing more than one symbol in (iv) and (v).}
\end{enumerate}
\paragraph{A weight- and yield-preserving bijection.}
With the construction above, we again show that there is a weight- and yield-preserving bijection between halting paths of $\tm$ and $\pda$.
The reasoning is analogous to the one in the forward direction.
While we are only interested in halting (accepting) paths, i.e., paths going from $\qinit$ to $\qfinal$ with a non-zero weight, we again prove a stronger result that there exists a weight- and yield-preserving bijection between all $\states$-subpaths.

\begin{definition}
A \defn{$\states$-subpath} in $\tm$ is a subpath whose last state corresponds to a state that also exists in the original $\qptm$.
Note that $\pda$'s states are constructed to be a \emph{sub}set of those in $\tm$.\looseness=-1
\end{definition}

($\Rightarrow$)
We define a weight- and yield-preserving mapping $\psi_3$ from the subpaths $\apath_{\pda}$ in the original $\pda$ to the subpaths $\apath_{\tm}$ in the constructed $\tm$.
Fix an arbitrary $\states$-subpath $\apath_{\pda}$ in $\pda$. 
Our proof proceeds by structural induction on the subpath relation: $\apathtwo_{\pda} \prec \apath_{\pda} \iff \apathtwo_{\pda}$ is a strict $\states$-subpath of $\apath_{\pda}$. 
Note that this is a well-founded ordering, making it suitable for induction.

\paragraph{Inductive Hypothesis.}
For all $\apathtwo_\pda \prec \apath_\pda$, the function $\psi_3$ is weight- and yield-preserving.

\paragraph{Base Case.}
The function $\psi_3$ is defined to map the empty subpath to itself with weight 1 and yield $\varepsilon$.
This preserves the weight and yield.

\begin{figure}[h!]
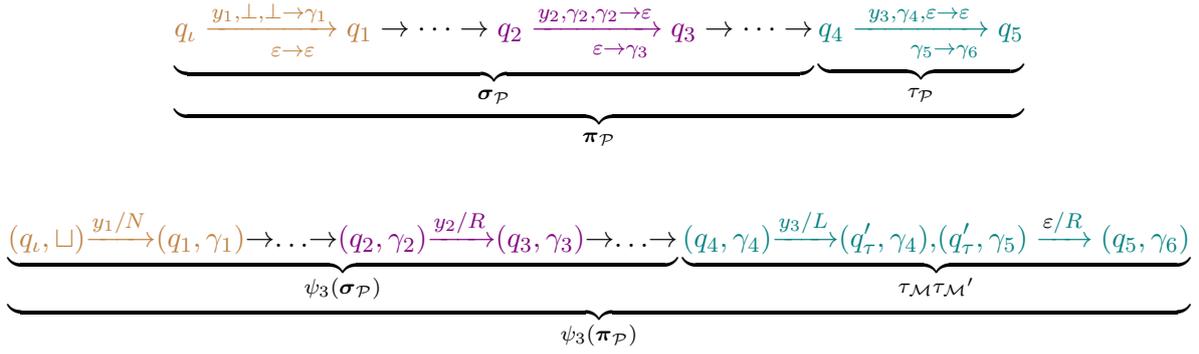

    \centering
    \begin{align*}
        \underbrace{\underbrace{{\color{brown}q_\iota\xrightarrow[\phantom{y_1, \bot}\varepsilon\to\varepsilon]{y_1, \bot, \bot\to\gamma_1}q_1} \to \dots 
        \to {\color{violet}q_2\xrightarrow[\phantom{y_2, \gamma_2}\varepsilon\to\gamma_3]{y_2, \gamma_2, \gamma_2 \to\varepsilon}q_3} \to\dots\to}_{\apathtwo_\pda}
        \underbrace{{\color{teal}q_4\xrightarrow[\phantom{y_3, \gamma_4}\gamma_5 \to \gamma_6]{y_3, \gamma_4, \varepsilon\to\varepsilon}q_5}}_{\tau_\pda}}_{\apath_\pda}
    \end{align*}
    \begin{align*}
        \underbrace{\underbrace{{\color{brown}(q_\iota, \sqcup) {\xrightarrow{y_1/N}} (q_1, \gamma_1)} {\to} {\dots} {\to}
        {\color{violet}(q_2, \gamma_2) {\xrightarrow{y_2/R}}(q_3, \gamma_3)}
        {\to} {\dots} {\to}}_{\psi_3\left(\apathtwo_\pda\right)}
        \underbrace{{\color{teal}(q_4, \gamma_4) {\xrightarrow{y_3/L}}(q'_\tau, \gamma_4){,}(q'_\tau, \gamma_5)\xrightarrow{\eps/R}(q_5, \gamma_6)}}_{\tau_\tm\tau_\tm'}}_{\psi_3\left(\apath_\pda\right)}
    \end{align*}
    \caption{Illustration of how $\psi_3$ maps a path in the original $\twopda$ (top half) to a path in the $\qptm$ (bottom half). Note that those states with a prime do \emph{not} correspond to states in the original $\twopda$.\looseness=-1}
    \label{fig:pda-ptm-bijection-1}
\end{figure}

\paragraph{Inductive Case.}
Let $\apath_\pda$ be an arbitrary non-empty subpath in $\pda$.
As depicted in \Cref{fig:pda-ptm-bijection-1}, let $\apathtwo_{\pda}$ be the (strict) subpath of $\apath_{\pda}$ that omits the last transition $\tau_\pda$. Then, $\apathtwo_\pda$ is mapped as follows: $\apathtwo_{\pda} \mapsto \psi_3(\apathtwo_{\pda})$.
This mapping exists and is weight- and yield-preserving by the inductive hypothesis.
Now, we seek to extend the function $\psi_3$ to $\apath_{\pda} = \apathtwo_{\pda}  \circ \tau_\pda$.
By our construction of $\tm$, each of the transitions in $\apath_\pda$ is simulated by a number of transitions in $\tm$.
Importantly, the transitions corresponding to any transition $\tau_\pda$ in $\pda$ are \emph{unique} to the particular $\tau_\pda$ due to the $\pda$-transition-dependant naming of the added states $\states'$ in $\tm$.
Thus, we can extend $\psi_3$ in the following manner:
\begin{subequations}
\begin{align}
    \psi_3(\apath_{\pda}) &= \psi_3(\apathtwo_{\pda} \circ \tau_{\pda}) = \psi_3(\apathtwo_{\pda}) \circ \underbrace{\psi_3(\tau_{\pda})}_{\defeq \tau_{\tm}} = \apathtwo_\tm  \circ \tau_{\tm} \label{eq:psi3-eq-1} \\
    \psi_3(\apath_{\pda}) &= \psi_3(\apathtwo_{\pda} \circ \tau_{\pda}) = \psi_3(\apathtwo_{\pda}) \circ \underbrace{\psi_3(\tau_{\pda})}_{\defeq \tau_{\tm, 1} \circ \cdots \circ \tau_{\tm, L_{\tau_\pda}}} = \apathtwo_{\tm} \circ \tau_{\tm, 1} \circ \cdots \circ \tau_{\tm, L_{\tau_\pda}}\label{eq:psi3-eq-2}
\end{align}
\end{subequations}
where $\tau_{\tm, 1}, \ldots, \tau_{\tm, L_{\tau_\pda}}$ are the $L_{\tau_\pda}$ transitions in $\tm$ that the transition $\tau_\pda$ corresponds to.\footnote{In cases (iib) and (iiib), $L_{\tau_\pda}=2$, and in case (vi)  $L_{\tau_\pda}=3$. In cases (iv) and (v), $L_{\tau_\pda}$ is linearly bounded by the number of symbols printed on the tape of $\tm$ to the right of its head at the start of simulating $\pda$'s transition.}
Note that the extension $\psi_3$ preserves the weight and yield of $\apath_{\pda}$. 
This is clear by inspecting the construction as exactly one transition inherits $\tau_{\tm}$'s weight and yield, while the remaining transitions have weight 1 and the yield $\varepsilon$.

($\Leftarrow$)
Next, we define a weight- and yield-preserving mapping from the subpaths $\apath_{\tm}$ in the constructed $\qptm$ to the subpaths $\apath_{\pda}$ in the $\pda$.
Fix an arbitrary $\states$-subpath $\apath_{\tm}$ in $\tm$. 
Our proof proceeds by structural induction on the subpath relation: $\apathtwo_{\tm} \prec \apath_{\tm} \iff \apathtwo_{\tm}$ is a strict $\states$-subpath of $\apath_{\tm}$. 
Note that this is a well-founded ordering, making it suitable for induction.

\paragraph{Inductive Hypothesis.}
For all $\apathtwo_{\tm} \prec \apath_\tm$, the function $\psi_4$ is weight- and yield-preserving.

\paragraph{Base Case.}
The function $\psi_4$ is defined to map the empty subpath in $\tm$ to the empty subpath in $\pda$ with weight 1 and yield $\varepsilon$.
This preserves the weight and yield.

\paragraph{Inductive Case.}
Let $\apath_{\tm}$ be an arbitrary non-empty $Q$-subpath in $\tm$.
Consider the figure below:\looseness=-1
\begin{figure}[h!]
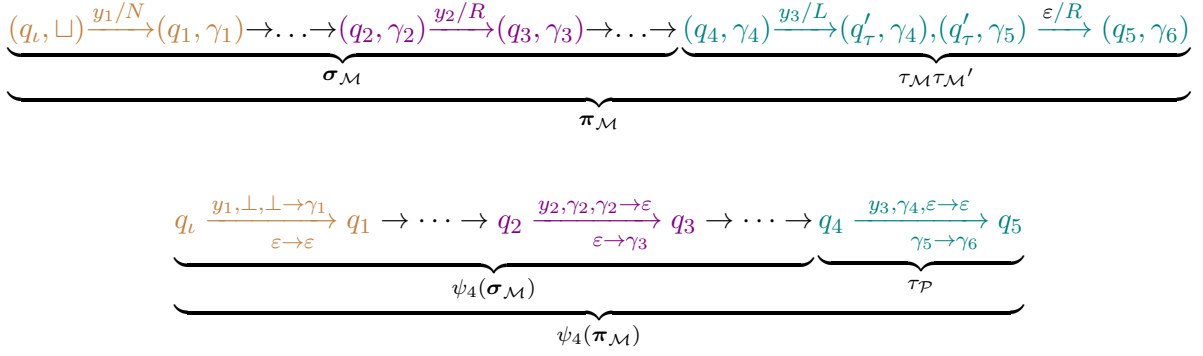

    \centering
    \begin{align*}
        \underbrace{\underbrace{{\color{brown}(q_\iota, \sqcup) {\xrightarrow{y_1/N}} (q_1, \gamma_1)} {\to} {\dots} {\to}
        {\color{violet}(q_2, \gamma_2) {\xrightarrow{y_2/R}}(q_3, \gamma_3)}
        {\to} {\dots} {\to}}_{\apathtwo_\tm}
        \underbrace{{\color{teal}(q_4, \gamma_4) {\xrightarrow{y_3/L}}(q_\tau', \gamma_4){,}(q_\tau', \gamma_5)\xrightarrow{\eps/R}(q_5, \gamma_6)}}_{\tau_\tm\tau_\tm'}}_{\apath_\tm}
    \end{align*}
    \begin{align*}
        \underbrace{\underbrace{{\color{brown}q_\iota\xrightarrow[\phantom{y_1, \bot}\varepsilon\to\varepsilon]{y_1, \bot, \bot\to\gamma_1}q_1} \to \dots 
        \to {\color{violet}q_2\xrightarrow[\phantom{y_2, \gamma_2}\varepsilon\to\gamma_3]{y_2, \gamma_2, \gamma_2 \to\varepsilon}q_3} \to\dots\to}_{\psi_4(\apathtwo_\tm)}
        \underbrace{{\color{teal}q_4\xrightarrow[\phantom{y_3, \gamma_4}\gamma_5 \to \gamma_6]{y_3, \gamma_4, \varepsilon\to\varepsilon}q_5}}_{\tau_\pda}}_{\psi_4(\apath_\tm)}
    \end{align*}
    \caption{Illustration of how $\psi_4$ maps a path in the original $\qptm$ (top half) to a path in the $\twopda$ (bottom half). Note that those states with a prime do \emph{not} correspond to states in the original $\twopda$.\looseness=-1}
    \label{fig:pda-ptm-bijection-2}
\end{figure}

By construction, this path can be decomposed into a strict $\states$-subpath 
$\apathtwo_{\tm} \prec \apath_\tm$ and either one or two additional transitions as defined in (i)--(vi), as shown in \cref{fig:pda-ptm-bijection-2}.
In the single-transition case, post-pended to $\apathtwo_\tm$ is either a single transition in $\tm$ ({\color{brown}type (iia)} and {\color{violet}type (iiia)}).
In the multi-transition case, post-pended to $\apathtwo_\tm$ is a sequence of consecutive transitions of $\tm$ that together simulate a single transition in $\pda$, e.g. two consecutive new transitions added for a $\pda$-transition of {\color{teal}type (iib)} as shown in the example at the top of \cref{fig:ptm-pda-bijection-2}.
In the first case, $\psi_2$ maps that action in $\apath_{\tm}$ to its (unique) corresponding action in $\pda$.
In the case of multiple transitions, note that due to the uniqueness of the added named states in $\tm$ ($\stateq'_\tau$ for the transition of {\color{teal}type (iib)} in the example), all non-zero-weighted paths containing such transition-specific additional states in $\tm$ will contain \emph{all} the transitions in $\tm$ that belong its sequence, consecutively.
Such sequences of transitions have a corresponding transition in the original $\pda$---the transition for which they were added. 

Thus, we can extend $\psi_4$ in the following manner:
\begin{subequations}
\begin{align}
    \psi_4(\apath_{\tm}) &= \psi_4(\apathtwo_{\tm} \circ \tau_{\tm}) = \psi_4(\apathtwo_{\tm}) \circ \underbrace{\psi_4(\tau_{\tm})}_{\defeq \tau_{\pda}} = \apathtwo_\pda \circ \tau_{\pda} \label{eq:psi4-eq-1} \\
    \psi_4(\apath_{\tm}) &= \psi_4(\apathtwo_{\tm} \circ \tau_{\tm, 1} \circ \cdots \circ \tau_{\tm, L_{\tau_\pda}}) = \psi_4(\apathtwo_{\tm}) \circ \underbrace{\psi_4(\tau_{\tm, 1} \circ \cdots \circ \tau_{\tm, L_{\tau_\pda}})}_{\defeq \tau_\pda} = \apathtwo_{\pda} \circ \tau_\pda\label{eq:psi4-eq-2}
\end{align}
\end{subequations}
depending on whether $\tau_{\pda}$ corresponds to one (\Cref{eq:psi4-eq-1}) or multiple transitions (\Cref{eq:psi4-eq-2}) in $\tm$.
Here again $\tau_{\tm, 1}, \ldots, \tau_{\tm, L_{\tau_\pda}}$ are the $L_{\tau_\pda}$ transitions in $\tm$ the transition $\tau_\pda$ corresponds to.
Note that the extension $\psi_4$ preserves the weight and yield. 
This is clear by inspecting the construction as exactly one transition inherits $\tau_{\pda}$'s weight and yield.
In the multi-transition case, the second transition is given weight 1 and the yield $\varepsilon$.

\paragraph{Wrapping up.}
Thus, we have defined a pair ($\psi_3$ and $\psi_4$) of weight-preserving, yield-preserving total functions that map arbitrary $\states$-subpaths\footnote{All paths in $\pda$ are $\states$-subpaths by definition.} in $\pda$ to paths in $\tm$ and vice versa.
It is easy to see that the two maps are inverses of each other; $\psi_3$ undoes the operations of $\psi_4$, and vice versa.
This means that $\psi_3$ is a bijection.
Finally, because all halting paths in $\pda$ are $\states$-subpaths, we conclude $\pda$ and $\tm$ are strongly equivalent.
We therefore conclude that the classes of $\twopda$ and $\qptm$ are strongly equivalent.

\end{proof}

\section{Proof of \cref{thm:pda-epsrnn}} \label{sec:proof-anej}
The following theorem formalizes the informal claim made by \cref{thm:pda-epsrnn}, which says that every $\alphabet$-deterministic probabilistic $\twopda$s admits a strongly equivalent $\epsrnnlm$s, establishing a lower bound on the expressivity of $\epsrnnlm$s.
Due to the $\alphabet$-determinism, the proof is a simple extension of the weighted extension to Minsky's construction recently detailed in \citet{svete2023recurrent}---it uses the correspondence between the paths in the $\twopda$ and the $\epsrnnlm$ produced by \citeposs{Chung2021} construction to define a natural weighting of the paths resulting in a weight- and yield-preserving mapping.\looseness=-1
\begin{theorem}\label{thm:minsky-probs}
    Let $\pLM$ be a language model defined by the $\alphabet$-deterministic probabilistic $\twopda$ $\pda$.
    Let $\rnn$ be an RNN over $\eosepsalphabet$ as defined by \citeposs{Chung2021} construction that furthermore defines the $\epsrnnlm$ $\pLM_\rnn$ with the output matrix $\outMtx \in \Q^{|\eosepsalphabet| \times \hiddDim}$:
    \begin{align}
        \eOutMtx_{\sym, \left(\stacksym, \stateq\right)} &\defeq \pLM\left(\sym \mid \stacksym, \stateq\right)  \qquad \qquad \text{ for } \sym \in \epsalphabet \\
    \eOutMtx_{\eos, \left(\stacksym, \stateq\right)} &\defeq \begin{cases}
        1 & \textbf{if } \stateq = \qfinal\\
        0 & \textbf{if } \stateq \neq \qfinal
    \end{cases}.
    \end{align}
    and the projection function $\projFunc{\vx} = \mathrm{sparsemax}\left(\vx\right) = \argmin_{\vz \in \SimplexEosEpsalphabetminus} \norm{\vx - \vz}_2$. 
    Then, it holds that $\pLM_{\rnn}$ is strongly equivalent to $\pLM$.    
\end{theorem}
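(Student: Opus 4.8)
The plan is to prove strong equivalence in two stages: first, that the unweighted dynamics of \citeposs{Chung2021} controller faithfully track the configuration of $\pda$, and second, that the output matrix $\outMtx$ together with the $\sparsemax$ projection turns those dynamics into exactly the right weighting of paths. For the dynamics I would reuse the construction reviewed in \cref{sec:unweighted-case}: a fixed block of RNN sub-steps (parallel pushes/pops into scratch neurons, copy-back of the branch selected by the fed-back output symbol, and a Minsky-style update of $\onehot{\stateq}$) realizes one transition of $\pda$, while the hidden state always exposes the stack encodings $\toStackEncFun{\stackseq_1},\toStackEncFun{\stackseq_2}$, the readouts $\onehot{\texttt{top}\left(\stackseq_1\right)},\onehot{\texttt{top}\left(\stackseq_2\right)}$, and $\onehot{\stateq}$. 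I would add one further sub-step that forms the joint one-hot $\onehot{\stacksym,\stateq}$ of $\stacksym\defeq\texttt{top}\left(\stackseq_1\right)$ and $\stateq$ using the saturated sigmoid as a conjunction, so that $\outMtx$ can be indexed by the pair $\left(\stacksym,\stateq\right)$ as in the theorem; a phase counter in the hidden state gates $\outMtx$ so that every intermediate sub-step of a macro-step emits $\eps$ with probability $1$ and only the last sub-step reads off $\outMtx$.

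Next I would check that the emitted next-symbol distribution is correct. Since $\pda$ is $\alphabet$-deterministic, for every non-final configuration $\left(\stacksym,\stateq\right)$ and every $\sym\in\epsalphabet$ there is at most one transition $\twoPdaEdge{\stateq}{\sym,\stacksym}{\circ}{\circ}{\circ}{\circ}{\circ}$ of non-zero weight, so local normalization of $\trans$ collapses to $\sum_{\sym\in\epsalphabet}\pLM\left(\sym\mid\stacksym,\stateq\right)=1$; together with $\eOutMtx_{\eos,\left(\stacksym,\stateq\right)}=0$ for $\stateq\neq\qfinal$, the column $\eOutMtx_{\cdot,\left(\stacksym,\stateq\right)}$ is then a probability vector in $\SimplexEosEpsalphabetminus$. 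For $\stateq=\qfinal$, which we may assume is a sink (reaching it halts the machine, so its outgoing mass is irrelevant and can be taken to be $0$), the column puts all its mass on $\eos$. Hence $\outMtx\onehot{\stacksym,\stateq}$ already lies on the simplex, $\sparsemax$ acts as the identity there, and at the last sub-step of a macro-step the model emits $\sym$ with probability exactly $\pLM\left(\sym\mid\texttt{top}\left(\stackseq_1\right),\stateq\right)$ evaluated at the current configuration of $\pda$. Because each emitted symbol is fed back as input, the configuration encoded in the hidden state just before the controller emits its $(n+1)$-th symbol equals the configuration of $\pda$ after its first $n$ transitions.

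With these facts I would build the weight- and yield-preserving bijection $\Phi$ between halting $\pda$-paths and halting $\rnn$-runs by induction on the prefix order of paths, in the style of the bijections in \cref{appendix:qptm-twopda}: the empty path maps to the empty run; a path $\apathtwo\circ\tau$ maps to $\Phi\left(\apathtwo\right)$ followed by the macro-step that emits $\tau$'s output symbol (well-definedness and uniqueness here are exactly $\alphabet$-determinism — a configuration together with an emitted symbol fixes the transition); and a path that has reached $\qfinal$ is completed by the macro-step emitting $\eos$ with probability $1$. Then $\Phi$ is yield-preserving, since a path and its image emit the same sequence over $\epsalphabet$ and the yield on each side is that sequence with the $\eps$'s removed, and it is weight-preserving, since the probability $\pda$ assigns to $\apath=\tau_1\cdots\tau_N$ is $\prod_{n=1}^{N}\trans\left(\tau_n\right)$, while the product of $\rnn$'s conditionals along $\Phi\left(\apath\right)$ equals $\prod_{n=1}^{N}\pLM\left(\sym_n\mid\texttt{top}\left(\stackseq_1^{(n)}\right),\stateq^{(n)}\right)$ (the final $\eos$-step contributing a factor $1$), which equals $\prod_{n=1}^{N}\trans\left(\tau_n\right)$ because $\tau_n$ is the unique transition emitting $\sym_n$ from $\left(\texttt{top}\left(\stackseq_1^{(n)}\right),\stateq^{(n)}\right)$. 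Since $\Phi$ plainly has an inverse (strip each macro-step back to its $\pda$ transition), it is a bijection, which gives strong equivalence; summing over halting paths then yields $\pLM_{\rnn}=\pLM$ as well.

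I expect the main obstacle to be the low-level bookkeeping behind ``one $\pda$ transition $=$ one RNN macro-step'': checking that, across the several sub-steps of \citeposs{Chung2021} update (the parallel stack operations, the copy-back, and the added conjunction sub-step), the phase counter forces every non-final sub-step to output $\eps$ deterministically, that the stack and state encodings are valid precisely at the sub-step where $\outMtx$ is applied, and that no spurious mass reaches $\eos$ before $\qfinal$ is entered. The conceptual point, by contrast, is already baked into the hypotheses: $\alphabet$-determinism is exactly what lets a weighting defined per (configuration, emitted symbol) — all that $\outMtx$ composed with an autoregressive projection can encode — reproduce the per-transition weighting of $\pda$.
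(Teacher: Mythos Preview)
Your proposal is correct and follows essentially the same approach as the paper's proof: leverage \citeposs{Chung2021} controller for the configuration-tracking dynamics, define $\outMtx$ so that its columns are the $\twopda$'s conditional distributions, observe that $\sparsemax$ is the identity on the simplex, and exhibit a weight- and yield-preserving bijection between $\pda$-paths and $\rnn$-runs. You are considerably more explicit than the paper about the multi-sub-step macro-step structure (the phase counter gating $\outMtx$ and the deterministic $\eps$-emission on intermediate sub-steps), whereas the paper's formal proof absorbs all of that into the correctness of \citeposs{Chung2021} construction and treats each $\hiddStatet$ as already indexed by $\twopda$ transitions.
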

\begin{proof}
We construct a weight- and yield-preserving bijection between the paths in $\twopda$ and $\rnn$.
Let $\apath_\pushdown$ be a path in $\pushdown$: 
\begin{align*}
    \apath_\pda = \qinit\xrightarrow[\phantom{y_1, \stackbottom}\stacksym^2_0 \to\stacksym^2_1]{\sym_1, \stackbottom, \stackbottom\to\stacksym^1_1}\stateq_1 \to \dots \to
    \stateq_{\strlen - 1}\xrightarrow[\phantom{\sym_\strlen, \stacksym_{\strlen - 1}}\stacksym^2_{\strlen - 1} \to \stacksym^2_{\strlen}]{\sym_{\strlen}, \stacksym^1_{\strlen - 1}, \stacksym^1_{\strlen - 1}\to\stacksym^1_{\strlen}}\qfinal
\end{align*}

The definition of the RNN controller based on the construction by \citet{Chung2021} ensures that the hidden states $\hiddStatet$ of the RNN are correctly updated, that is, that there is a one-to-one mapping between the hidden states of the RNN and the configurations of the $\twopda$ at every step of the construction. 
This ensures that $\overline{\hiddState}_\tstep = \onehot{\stacksym_{1, \tstep}, \stateq_\tstep}$ for every $\tstep$.
Let $\psi$ be the function mapping paths in the $\twopda$ to those in $\rnn$ as
\begin{equation}
    \psi\left(\apath_\pushdown\right) \defeq \hiddStateZero \circ \hiddState_1 \circ \cdots \circ \hiddState_\strlen.
\end{equation}
Due to the correctness of the RNN transition function, $\psi$ is yield-preserving.
Moreover, the mapping is bijective---the one-to-one correspondence between the hidden states and the configurations of the $\twopda$ ensures that there is a one-to-one correspondence between the $\twopda$ transitions and the transitions between the hidden states, which furthermore means that the expansion of the mapping to entire paths is bijective.\looseness=-1

Furthermore, the form of the hidden states $\hiddState$ of the RNN enables the simple transformation of $\hiddStatet$ into $\overline{\hiddState}_\tstep = \onehot{\stacksym_{1, \tstep}, \stateq_\tstep}$ for every $\tstep$, where $\stacksym_{1, \tstep}$ is the top symbol of stack $1$ and $\stateq_\tstep$ the state of $\pushdown$ at time step $\tstep$.
Notice that by the definition of $\alphabet$-determinism, $\overline{\hiddState}_\tstep$ contains all the information we need to determine the transition probabilities of $\pushdown$ given any input symbol $\sym \in \epsalphabet$. 
More precisely, it holds that for all $\sym \in \epsalphabet$\looseness=-1
\begin{equation} \label{eq:pda-rnn-eq}
    \mathrm{sparsemax}\left(\outMtx \overline{\hiddState}_\tstep\right)_\sym = \pLM\left(\sym \mid \stacksym, \stateq\right) 
\end{equation}
Lastly, the definition of $\eOutMtx_{\eos, \left(\stacksym, \stateq\right)}$ ensures that $\rnn$ outputs $\eos$ if and only if the simulated $\twopda$ is in its final state, $\qfinal$.

From \cref{eq:pda-rnn-eq}, it is easy to see that the probability of $\apath_\pushdown$ is 
\begin{equation}
    \pLM\left(\apath_\pushdown \right) = \prod_{\tstep = 1}^\strlen \pLM\left(\sym_\tstep\mid \stacksym^1_{\tstep - 1}, \stateq_{\tstep - 1}\right) = \prod_{\tstep = 1}^\strlen \pLM\left(\sym_\tstep\mid \stacksym^1_{\tstep - 1}, \stateq_{\tstep - 1}\right) = \pLM_\rnn\left(\psi\left(\apath_\pda\right)\right),
\end{equation}
showing that $\psi$ is indeed weight-preserving, which finishes the proof.
\end{proof}

\section{Comparison Between Different Variants of Probabilistic Turing Machines}\label{appendix:ptm-comparison}

\begin{figure}
    \includegraphics[width=0.9\textwidth]{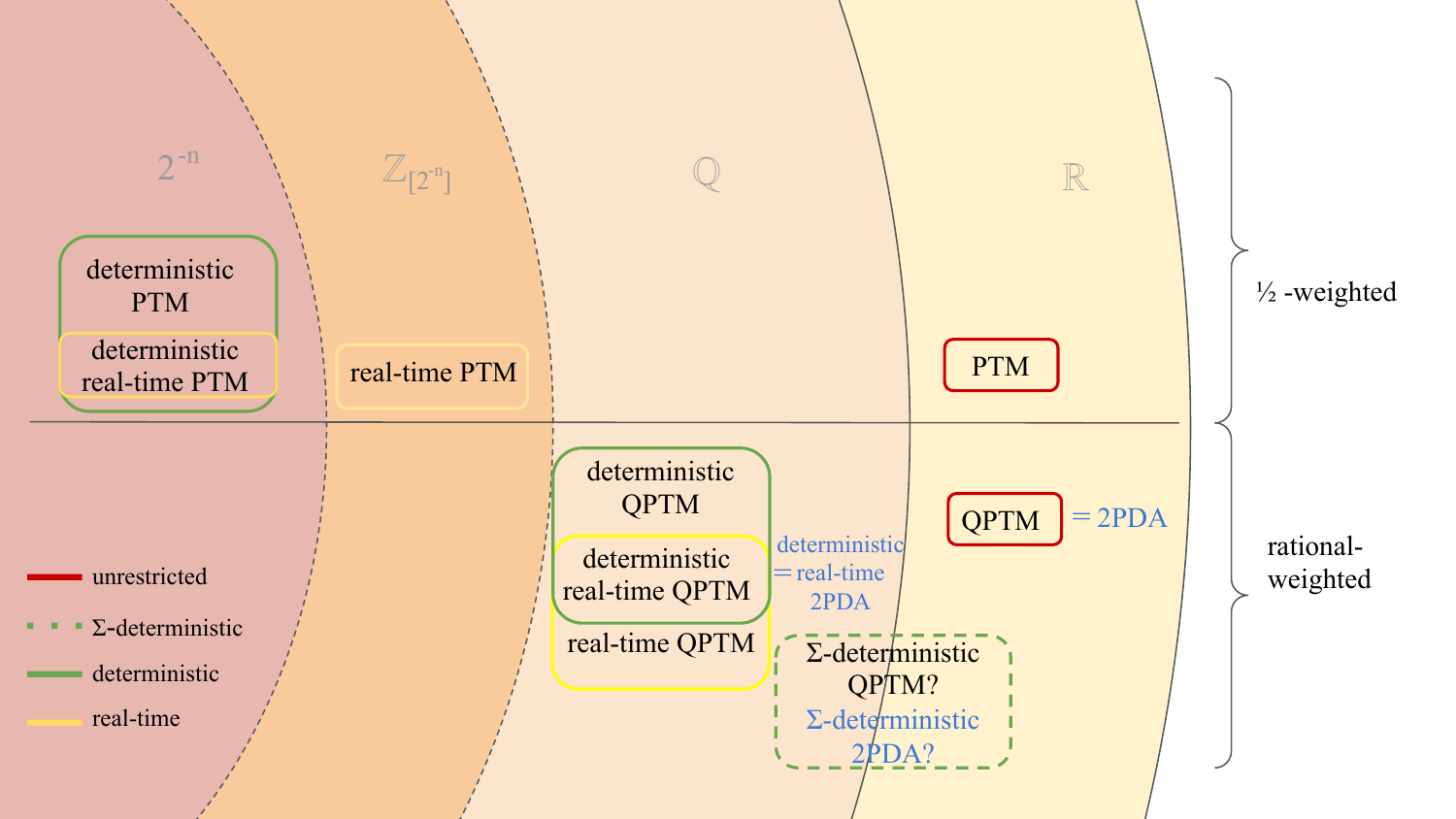}
    \caption{A schematic illustration of the different types of Turing machines and 2PDA and their corresponding place in a hierarchy of distributions. The curves differentiate different types of distributions, where $2^{-n}$ means every string has a binary probability, $\Dyadic$ refers to the dyadic distributions, and $\Q$ and $\R$ are the rational-valued and real-valued distributions, respectively. The colors of the boxes indicate which restrictions are placed on the automata, from real-time and deterministic, via $\alphabet$-deterministic, to unrestricted. Finally, the horizontal line divides formulations of machines that have just two transition functions that are uniformly distributed vs. the case of finitely many rational-values transition functions. Note that the $\alphabet$-deterministic automata are placed between the rational-valued and the real-valued distributions.}
    \label{fig:tm-comparison}
    \vspace{-15pt}
\end{figure}

We have introduced a number of different formulations of $\ptm$s (and $\twopda$) with various additions and restrictions in the course of this work.
In this section of the appendix, we compare the types of distributions that can be expressed by each of these variants of $\ptm$s.
For an illustrative overview of the distributions that can be expressed by the different models, see \cref{fig:tm-comparison}.\looseness=-1

\begin{definition}
    A $\ptm$ is \defn{deterministic} if, for any configuration $\stateq, \tapesym \in\states\times\tapealphabet$, and any symbol $\sym\in\epsalphabet$, there is at most one transition starting at that configuration and emitting $\sym$. Furthermore, if there is a transition starting in $(\stateq, \tapesym)$ outputting $\eps$, both transitions must be identical.
\end{definition}
Note that the definition above is more restrictive than that of a general $\ptm$, but a superset of the class of deterministic Turing machines, $\tm$s, which can be thought of as unweighted $\ptm$s with the restriction that both transition functions are identical.
It is a well-known result that $\tm$s are computationally equivalent to $\ptm$s, that is, they can recognize the same (unweighted) languages.
However, in contrast to $\tm$s, deterministic $\ptm$s as defined above can still express a semimeasure over strings, albeit a trivial one (each string $\str$ that can be generated has a probability of $2^{-|\str|}$ of being generated).

\begin{proposition}
    A deterministic $\ptm$ can only express distributions where each finite string has a binary probability, that is, a probability of the form $2^{-n}$.
\end{proposition}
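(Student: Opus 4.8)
The plan is to show that a deterministic $\ptm$ $\tm$ has, for each string $\str \in \kleene{\alphabet}$, \emph{at most one} halting path whose yield is $\str$, and that this path (when it exists) carries probability $2^{-n}$, where $n$ counts the steps along it at which the two transition functions genuinely disagree; since $\tmprob(\str)$ is by definition the total probability of the halting paths yielding $\str$, the claim follows immediately. The starting point is a dichotomy for the steps of a deterministic $\ptm$: at any configuration $(\stateq,\tapesym)$, either $\trans_1$ and $\trans_2$ coincide there --- call the step \textbf{forced}; it determines the next instantaneous description uniquely and occurs with probability $1$ --- or they differ, in which case, by the ``at most one transition emitting $\sym$'' condition together with the ``if a transition emits $\eps$ then both transitions are identical'' condition, $\trans_1$ and $\trans_2$ emit two \emph{distinct, non-$\eps$} output symbols, and each of the two successors is reached with probability $\tfrac12$; call such a step \textbf{branching}.

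Next I would establish path-uniqueness. Suppose $\apath$ and $\apathtwo$ are halting paths of $\tm$, both yielding $\str$, and let $i$ be the first step at which they differ (adopting the convention that a halting path stops the first time it enters $\qfinal$, so that this difference is genuine and, in particular, both paths have at least $i$ steps). The first $i-1$ transitions agree, hence so do the full instantaneous descriptions --- state, tape contents, head position, and output tape --- after $i-1$ steps; let $\strs$ denote the common value of the output tape at that point. A genuine disagreement at step $i$ forces that step to be branching, so $\apath$ and $\apathtwo$ append distinct non-$\eps$ symbols $\sym \ne \symPrime$ to $\strs$; because the output tape is append-only, the symbol in position $|\strs|+1$ of the final output would then have to be both $\sym$ and $\symPrime$, contradicting that both paths yield $\str$. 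Hence $\apath = \apathtwo$.

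It then remains to compute the probability of the unique halting path $\apath$ yielding $\str$ (if there is none, $\tmprob(\str) = 0$, which one may read as $n = \infty$; the proposition concerns strings that are actually generated). Tracing $\apath$ through the coin flips that select a transition function at each step, the forced steps contribute a factor $1$ while each of the $n$ branching steps contributes $\tfrac12$, so $\tmprob(\str) = 2^{-n}$; equivalently, the $2^{f}$ coin-flip sequences realising $\apath$ (with $f$ its number of forced steps) each have probability $2^{-(f+n)}$ and sum to $2^{-n}$. Since each branching step emits exactly one symbol of $\str$, one moreover has $n \le |\str|$. I expect the only delicate point to be the uniqueness argument: making the bookkeeping of instantaneous descriptions precise and, especially, handling correctly the case in which the two paths would differ in length because one reaches $\qfinal$ earlier.
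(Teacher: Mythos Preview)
Your proof is correct and follows the same overall strategy as the paper's: establish that at most one halting path yields a given string, then read off its probability. Your version is in fact more careful than the paper's. The paper's three-line argument simply asserts uniqueness and takes $n$ to be the \emph{length} of the accepting path, invoking the convention from Remark~2.5 that each path has probability $2^{-n}$. Your forced/branching dichotomy makes explicit what the paper glosses over: the definition of a deterministic $\ptm$ permits configurations where $\trans_1$ and $\trans_2$ coincide (indeed, the second clause forces this whenever $\eps$ is emitted), and at such steps the coin flip is immaterial, so the correct exponent is the number of \emph{branching} steps rather than the total path length. Either exponent is of the required form $2^{-k}$, so the proposition holds regardless, but your bookkeeping is the right one and also yields the sharper observation $n\le |\str|$.
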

\begin{proof}[Proof]
Let $\str \in \kleene{\alphabet}$ be a string.
Because $\ptm$ is deterministic, there is a unique path in $\ptm$ that accepts $\str$.
Let $n$ be the length of the accepting path.
Then, $ \tmprob(\str) = 2^{-n}$.
\end{proof}

\begin{definition}
    A $\ptm$ is \defn{real-time} if it has no $\eps$-transitions.
\end{definition}

\begin{definition}
    A \defn{dyadic rational} is a rational number whose numerator can be any integer and whose denominator is a power of 2. We denote the set of dyadic rationals with $\Dyadic$.\footnote{Note that the binary rationals are a special case of dyadic rationals where the numerator has to be 1.}
\end{definition}

\begin{proposition}
    A real-time $\ptm$ can express only dyadic measures, that is, the measure of each string is a dyadic rational. 
\end{proposition}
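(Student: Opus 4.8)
The strategy is to read the measure of a string directly off \Cref{rem:halting-prob}, which expresses
\[
    \tmprob(\str) \;=\; \sum_{\apath} 2^{-n(\apath)},
\]
where the sum ranges over all halting paths $\apath$ of the $\ptm$ whose output is $\str$, and $n(\apath)$ is the number of computation steps taken along $\apath$. The crucial observation is that, because a real-time $\ptm$ has no $\eps$-transitions, \emph{every} computation step writes exactly one symbol of $\alphabet$ onto the output tape. Hence any halting path whose output is $\str$ performs exactly $|\str|$ steps, i.e. $n(\apath) = |\str|$ for every $\apath$ contributing to the sum. (The string $\str = \eps$ is the only degenerate case: the sole $0$-step halting path outputs $\eps$, and it exists precisely when $\qinit = \qfinal$, so $\tmprob(\eps) \in \{0,1\}$, which is dyadic.)

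Next I would argue that only finitely many paths contribute. At each step at most two transitions are available (one from each transition function), so there are at most $2^{|\str|}$ computation paths of length $|\str|$ in total; in particular the set of halting paths with output $\str$ is finite. Writing $k_{\str} \in \Nzero$ for its cardinality, with $0 \le k_{\str} \le 2^{|\str|}$, \Cref{rem:halting-prob} collapses to
\[
    \tmprob(\str) \;=\; k_{\str}\cdot 2^{-|\str|},
\]
a rational number whose numerator $k_{\str}$ is an integer and whose denominator $2^{|\str|}$ is a power of $2$, i.e. a dyadic rational. Since $\str \in \kleene{\alphabet}$ was arbitrary, $\tmprob$ places dyadic measure on every string, which is exactly the claim.

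I do not expect a genuine obstacle here: the whole content is the bookkeeping remark that, in the absence of $\eps$-transitions, the length of any accepting path is pinned down by the length of its output string, which simultaneously makes all contributing path probabilities equal to $2^{-|\str|}$ and makes the number of such paths finite. The only point requiring a little care is handling the empty-string corner case explicitly so that the statement holds literally for all $\str$, including $\str = \eps$.
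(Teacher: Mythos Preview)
Your argument is correct and is essentially the same as the paper's: both observe that real-time forces every accepting path for $\str$ to have length exactly $|\str|$, that there are only finitely many such paths, and hence that $\tmprob(\str)=k\cdot 2^{-|\str|}$ for some integer $k$. Your version is slightly more explicit (you bound $k$ by $2^{|\str|}$ and handle $\str=\eps$ separately), but the core idea is identical.
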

\begin{proof}[Proof]
Let $\str \in \kleene{\alphabet}$ be a string, and let $n = |\str|$.
Because $\ptm$ is real-time, every path accepting $\str$ has length $n$.
Moreover, since $\ptm$ is real-time, it may only have a finite number of accepting paths for any string.
If $\ptm$ has $k$ accepting paths for the string $\str$, 
thne $\tmprob(\str) = k 2^{-n}$, which is a dyadic rational.

\end{proof}
\paragraph{A note on the language expressivity of real-time Turing machines.}
While deterministic and non-deterministic Turing machines can recognize or generate the same languages, it has been shown that all real-time languages are context-sensitive \citep{burkhard-1971-realtime}. 
In fact, there are real-time definable languages that are context-sensitive and not context-free.
On the other hand, there are also context-free languages that are not real-time definable \citep{rosenberg-1967-realtime}. 
Real-time Turing machines with just one tape have been shown to only recognize regular languages \citep{TADAKI201022}.
However, with just one more tape, the computational power increases dramatically \citep{rabin1963realtime}, allowing recognition of languages that are non-context-free \citep{rosenberg-1967-realtime}.\\

We now turn to the investigation of rationally weighted $\ptm$s, i.e., $\qptm$s. 
First, recall that an unrestricted $\qptm$ is weakly equivalent to an unrestricted $\ptm$ (\cref{prop:ptm-qptm}).
Furthermore, \citet[Thm. 3]{icard2020calibratinggm} showed that $\ptm$s define exactly the enumerable semimeasures (see \cref{appendix:icard3} for details).
This means that any enumerable real-valued semi-measure over strings can be expressed by a $\ptm$, and, hence, a $\qptm$.

\begin{proposition}
    Real-time deterministic $\qptm$s are strictly less expressive than general $\qptm$s.
\end{proposition}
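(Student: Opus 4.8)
The plan is to prove the two directions of the claimed strict inclusion separately. One direction is immediate: a $\qrdptm$ is by definition a $\qptm$, so every semimeasure expressed by a $\qrdptm$ is expressed by a $\qptm$. The substance is the strictness, and I would establish it by exhibiting a single semimeasure that is expressed by some $\qptm$ but by no $\qrdptm$, relying on the following structural observation: a $\qrdptm$ can only assign a \emph{rational} probability to each individual string.

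To see this, fix a $\qrdptm$ $\tm$ and a string $\str = \sym_1 \cdots \sym_\strlen \in \kleene{\alphabet}$. Since $\tm$ is real-time it has no $\eps$-emitting transitions, so every halting path with output $\str$ has length exactly $\strlen$; since $\tm$ is deterministic, the initial configuration is fixed and, given the configuration before step $t$, requiring the emission of $\sym_t$ leaves at most one admissible transition. Inductively, there is at most one halting path with output $\str$, and its probability is a product of finitely many rational transition weights; hence $\tmprob(\str) \in \Q$ for every $\str$. (Real-timeness alone already forces rationality, as it bounds both the length and the number of generating paths; determinism only streamlines the argument.) For the separating example, fix any computable irrational $c \in (0,1)$, say $c = 2/\pi$, and let $\measure$ be the semimeasure supported on the one-symbol string $\syma$ with $\measure(\syma) = c$. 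This is a legitimate semimeasure, since $\sum_\str \measure(\str) = c \le 1$, and it is enumerable, since $c$ is the limit of a computable nondecreasing sequence of rationals. By \citet[Thm.~3]{icard2020calibratinggm} (see also \cref{appendix:icard3}), $\ptm$s express exactly the enumerable semimeasures, so some $\ptm$, and hence by \cref{prop:ptm-qptm} some $\qptm$, expresses $\measure$; but no $\qrdptm$ can, since that would force $\tmprob(\syma) = c \notin \Q$, contradicting the previous observation. Together with the trivial inclusion, this proves the proposition.

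The only delicate point is the rationality lemma, and there the thing to get right is that it is precisely the absence of $\eps$-transitions that keeps each $\tmprob(\str)$ a finite sum of products of rationals — exactly the feature a general $\qptm$ breaks, accumulating irrational mass on a single string through unboundedly long $\eps$-padded paths. If one prefers to avoid invoking Icard's characterization, the same separating semimeasure can be built by hand: a $\ptm$ can generate fair bits with its two transition functions, compare them against the binary digits of $c$ computed on the fly using $\eps$-transitions, and emit $\syma$ or $\symb$ once the first disagreement occurs (the Knuth--Yao construction), yielding $\tmprob(\syma) = c$ exactly.
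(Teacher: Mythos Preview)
Your proof is correct and follows essentially the same approach as the paper: both arguments hinge on the observation that a $\qrdptm$ can only assign rational probabilities to individual strings, and then invoke \citeauthor{icard2020calibratinggm}'s characterization to produce a $\qptm$-expressible semimeasure with an irrational value. The only superficial difference is the separating example---the paper uses a Poisson distribution over $\{a\}^*$ with $\lambda = 1$, whereas you use a single-string semimeasure with mass $2/\pi$; your rationality lemma is also spelled out more carefully, and your Knuth--Yao alternative is a nice self-contained fallback the paper does not provide.
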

\begin{proof}
    By \cref{thm:icard}, $\ptm$s, and hence $\qptm$s, can express real-valued semimeasures over strings. For instance, there exists a $\qptm$ that generates the language $\kleene{\alphabet}$ for the one-symbol alphabet $\alphabet=\{a\}$, such that the probability of a string of a certain length is given by a Poisson measure: $\pLM(a^k)=\text{Pois}(\lambda, k) = \frac{\lambda^ke^{-\lambda}}{k!}$ for some $\lambda\in\R^+$.
    Let us choose, e.g., $\lambda=1$. Then the probability of each string in $\kleene{\alphabet}$ is an irrational number.
    However, an $\qrdptm$ has to output a symbol at each time step with a rational probability, and hence, there exists no $\qrdptm$ that can express the above language.
\end{proof}
Note that similar arguments can be made to show that both determinism as well as real-time on their own are enough to restrict distributions from such $\qptm$s to the rationals.
\begin{corollary}
    Deterministic $\twopda$ are strictly less expressive than non-deterministic $\twopda$.
\end{corollary}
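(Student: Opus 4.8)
The plan is to separate the two classes by exhibiting a measure that a non-deterministic $\twopda$ can realize but a deterministic one provably cannot — namely one taking irrational values. The upper-bound side, that deterministic $\twopda$s can only express rational-valued measures, I would prove directly, mirroring the argument already used above for deterministic and real-time $\qptm$s. Fix a deterministic $\twopda$ $\pda$ and a string $\str \in \kleene{\alphabet}$. By \cref{def:deterministic-twopda}, in every configuration $(\stateq, \tapesym)$ the automaton either has a single forced $\eps$-transition of weight $1$, or at most one non-zero-weight transition per output symbol. I would then argue that this forces the halting path yielding $\str$ to be \emph{unique}: the forced $\eps$-moves are deterministic, each time a symbol must be emitted it is the next symbol of $\str$ (which pins down the transition, since at most one transition emits it), and once all of $\str$ has been emitted the machine can only perform further forced $\eps$-moves before reaching $\qfinal$. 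Since halting paths are finite by definition, $\tmprob(\str)$ is either $0$ or a finite product of rational transition weights, hence $\tmprob(\str) \in \Q$.

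For the other direction I would invoke the equivalences already established: general $\qptm$s are weakly equivalent to general $\twopda$ (\cref{prop:qptm-pda}), and by \cref{thm:icard} the class of $\qptm$s realizes every enumerable semimeasure — in particular the Poisson measure over the one-letter alphabet $\alphabet = \{a\}$ with $\pLM(a^k) = \frac{e^{-1}}{k!}$, exactly the example used in the proof of the preceding proposition, whose values are irrational. The $\twopda$ weakly equivalent to that $\qptm$ is therefore necessarily non-deterministic and realizes an irrational-valued measure, which by the first paragraph no deterministic $\twopda$ can. Combining the two directions yields the strict inclusion asserted by the corollary.

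The only genuine obstacle is the uniqueness-of-path claim for deterministic $\twopda$s: one must check that the presence of $\eps$-transitions (absent in the cleanest $\rdpda$ case) does not reintroduce ambiguity, and rule out an accepting path that emits $\str$ and then either emits additional symbols or loops forever without halting. Both are handled by the clause of \cref{def:deterministic-twopda} forcing an $\eps$-transition to be the unique move whenever its weight is non-zero, together with finiteness of halting paths. An alternative would be to verify that the $\qptm \leftrightarrow \twopda$ construction of \cref{appendix:qptm-twopda} maps deterministic machines to deterministic machines and then quote the deterministic-$\qptm$ result verbatim; I would note this as a fallback but prefer the self-contained path-counting argument.
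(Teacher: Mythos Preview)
Your proposal is correct and uses the same separating idea as the paper---irrational versus rational string probabilities, with the Poisson example over a unary alphabet on the non-deterministic side. The execution differs. The paper's proof is a single sentence invoking the strong equivalence of $\qptm$s and $\twopda$ (\cref{prop:qptm-pda}) together with the preceding proposition (and the remark that determinism alone already confines a $\qptm$ to rational values); it leaves implicit the step you flag as your ``fallback'', namely that the construction of \cref{appendix:qptm-twopda} sends deterministic machines to deterministic machines. You instead prove the rational-value bound for deterministic $\twopda$ directly via the unique-path argument. Your route is more self-contained and sidesteps the need to verify that determinism is preserved under the $\qptm\leftrightarrow\twopda$ translation, at the cost of redoing an argument the paper already sketched for $\qptm$s. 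Your caveat about $\eps$-transitions is well placed: the second clause of \cref{def:deterministic-twopda} is exactly what makes the unique-path claim go through, and you invoke it correctly.
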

\begin{proof}
    This follows from the strong equivalence of $\qptm$s and $\twopda$ described in \cref{prop:qptm-pda}. 
\end{proof}

Finally, we leave the case of $\alphabet$-deterministic probabilistic automata for future work, but we hypothesize that it lies in the realm of real-valued (including irrational) distributions.

\section{Every $\epsrnnlm$s Has a Weakly Equivalent $\twopda$}\label{appendix:icard3}

In this section, we re-state basic definitions and theorems from computable analysis and then use them to show that the computational expressivity of $\epsrnnlm$s is bounded by that of a $\twopda$.
\begin{definition}\citet[Ch. 3]{icard2020calibratinggm}\label{def:lower-semi-computable}
A real number $r$ is \defn{lower semi-computable} if there exists a sequence of computably enumerable rationals $\{\stateq_n\}_{n=1}^{\infty}, \stateq_n \in \Q$ that is i) monotonically increasing in $n$ and ii) converges to $r$ from below, i.e., $\lim_{n\rightarrow \infty}\stateq_n = r$. 
\end{definition}
\begin{definition}\label{def:enumerable-distribution}
A semimeasure $\mu$ over $\kleene{\alphabet}$ is called \defn{enumerable} if for all $\str \in \kleene{\alphabet}$ we have $\mu(\str) = r$ for some $r$ that is lower semi-computable.
\end{definition}

\begin{theorem}\label{thm:icard}
    \citet[Thm. 3]{icard2020calibratinggm}
    Probabilistic Turing machines define exactly the enumerable semimeasures.
\end{theorem}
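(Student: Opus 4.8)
The plan is to prove the two inclusions separately: (i) the semimeasure $\tmprob$ induced by any $\ptm$ is enumerable, and (ii) every enumerable semimeasure arises as $\tmprob$ for some $\ptm$. Direction (i) is a short approximation argument; direction (ii) is the real content, and it is where I expect the technical care to go.

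For (i), fix a $\ptm$ $\tm$ and a string $\str \in \kleene{\alphabet}$. By \cref{rem:halting-prob}, $\tmprob(\str)$ is the sum of $2^{-n}$ over all halting computation paths of length $n$ whose output tape reads $\str$. Let $q_N$ be this sum restricted to paths of length at most $N$. Since $\tm$ branches into at most two choices per step, there are finitely many (at most $2^N$) coin sequences of length $\le N$ to simulate, so $q_N$ is a computable dyadic rational; the sequence $(q_N)_N$ is nondecreasing and $q_N \to \tmprob(\str)$ by monotone convergence. This exhibits $\tmprob(\str)$ as lower semi-computable (\cref{def:lower-semi-computable}), uniformly in $\str$, so $\tmprob$ is an enumerable semimeasure (\cref{def:enumerable-distribution}); and $\sum_\str \tmprob(\str) \le 1$ holds because the halting paths producing distinct outputs correspond to disjoint cylinders in the coin-flip probability space.

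For (ii), let $\mu$ be an enumerable semimeasure. The idea is to realize $\mu$ as the halting distribution of a machine that treats its fair coin flips $b_1 b_2 \cdots$ as the binary expansion of a uniformly random real $\omega = 0.b_1 b_2 \cdots \in [0,1)$ and commits to an output once the dyadic interval pinned down by the flips read so far falls inside a region reserved for that output. Using enumerability (and replacing the approximating rationals by a dyadicized nondecreasing version), decompose $\mu(\str) = \sum_j w^{(\str)}_j$ as a sum of nonnegative dyadic rationals, computable uniformly in $\str$; dovetailing over all pairs $(\str, j)$ merges these into one computable sequence of ``grants'' $(w_1, \str_1), (w_2, \str_2), \ldots$ with $\sum_j w_j = \sum_\str \mu(\str) \le 1$ and $\sum_{j : \str_j = \str} w_j = \mu(\str)$. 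The $\ptm$ lays the grants out as consecutive half-open subintervals $I_j = \big[\sum_{i < j} w_i,\ \sum_{i \le j} w_i\big)$ of $[0,1)$ (each $I_j$ a finite union of dyadic intervals, since its endpoints are dyadic), running this enumeration on its work tape while simultaneously flipping coins; after reading $b_1 \cdots b_k$ it checks whether $\big[0.b_1\cdots b_k,\ 0.b_1\cdots b_k + 2^{-k}\big) \subseteq I_j$ for some already-enumerated $j$, halting with $\str_j$ written out if so and otherwise flipping one more coin. Every $\omega$ lies in at most one $I_j$ (and in none precisely when $\omega \ge \sum_j w_j$); if $\omega \in I_j$ then the nested dyadic intervals around $\omega$ eventually fit inside $I_j$, so the machine halts with output $\str_j$ exactly on the event $\{\,\omega \in I_j\,\}$, which has probability $w_j$. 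Summing over $j$ with $\str_j = \str$ gives $\tmprob(\str) = \mu(\str)$, as required.

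The main obstacle is making direction (ii) rigorous. One must (a) turn the given lower-semi-computable approximations to each $\mu(\str)$ into a uniformly computable decomposition into a series of nonnegative dyadic rationals (the standard ``floor at level $2^{-n}$, then take increments'' trick); (b) handle boundary points $\omega$ that sit exactly on an endpoint of some $I_j$ (resolved by using half-open intervals and matching the depth of the digit-interval to the bit-length of the endpoint); (c) pin down an explicit interleaving schedule between advancing the dovetailed enumeration and consuming coin flips so that every $\omega \in I_j$ is eventually captured; and (d) confirm the machine never halts with a wrong output, which follows from the $I_j$ being pairwise disjoint while the digit-intervals are nested around $\omega$. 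None of these points is deep, but together they are the bookkeeping that the proof essentially consists of.
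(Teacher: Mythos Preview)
The paper does not prove this theorem at all; it merely cites it as \citet[Thm.~3]{icard2020calibratinggm} and then invokes it as a black box in the proof of \cref{prop:epsrnn-twopda}. So there is no ``paper's own proof'' to compare against.

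That said, your proof is correct and is essentially the classical argument (going back to de Leeuw--Moore--Shannon--Shapiro and Levin, and reproduced in \citet{icard2020calibratinggm} and Li--Vit\'anyi). Direction (i) is exactly right. For direction (ii), your interval-allocation construction is the standard one; a couple of minor points worth tightening if you wanted to write this out fully: in this paper's $\ptm$ model the coin flips \emph{are} the random choice between $\trans_1$ and $\trans_2$ rather than a separate resource, so ``deterministic'' computation steps must be implemented by making the two transition functions agree on the relevant configurations; and in (b) the cleaner observation is simply that the set of $\omega$ whose shrinking dyadic cell never lands inside the relevant $I_j$ (dyadic-rational boundary points under the ``wrong'' binary expansion) has Lebesgue measure zero, so it does not affect $\tmprob$. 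None of this changes the substance of your argument.
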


\epsrnnpda*
\begin{proof}
    We first show that every $\epsrnnlm$ defines an enumerable semimeasure.
    Let $\rnn$ be an $\epsrnnlm$ defined over alphabet $\alphabet$.
    We seek to show that the measure of any string $\str \in \kleene{\alphabet}$ computed by $\rnn$ is lower semi-computable.
    In general, there are (countably) many runs that may generate a string $\str$.
    The measure of $\str$ in $\rnn$ is the sum of all the weights of all such runs.
    More formally, let $\{\widehat{\str}_i\}_{i=1}^\infty \subset \alphabet^\ast_{\varepsilon}$ be an enumeration of all the runs in $\rnn$ that generate the string $\str \in \kleene{\alphabet}$, i.e., 
    for each $\widehat{y}_i$, the yield of $\widehat{\str}_i$ is $\str$.
    Construct the following sequence
    \begin{equation}
        r_n = \sum_{i=1}^n p(\widehat{\str}_i)
    \end{equation}
    Thus, $\lim_{n \rightarrow \infty} r_n = p(\str)$ and $r_n$ is monotonically increasing in $n$.
    Furthermore, because $\rnn$ is rationally weighted, every $r_n$ is rational.
    Next, note by \cref{prop:ptm-qptm} and \cref{prop:qptm-pda}, the classes of $\ptm$ and $\twopda$ are weakly equivalent.
    Then, combining this with \cref{thm:icard}, we have that $\twopda$ can express any enumerable semimeasure.
    This proves the claim.
\end{proof}

\end{document}